\documentclass[journal]{IEEEtranTIE}
\usepackage{graphicx}
\usepackage{cite}
\usepackage{picinpar}
\usepackage{amsmath}
\usepackage{amssymb}
\usepackage{amsthm}
\usepackage{enumitem}
\usepackage{url}
\usepackage[utf8]{inputenc}
\usepackage{colortbl}
\usepackage{soul}
\usepackage{multirow}
\usepackage{pifont}
\usepackage{color}
\usepackage{alltt}
\usepackage[hidelinks]{hyperref}
\usepackage{enumerate}
\usepackage{siunitx}
\usepackage{breakurl}
\usepackage{pbox}
\usepackage{lineno}
\usepackage[printonlyused,withpage,nolist]{acronym}
\usepackage{tabularx} 
\newcommand{\myfloatalign}{\centering} 
\DeclareOldFontCommand{\bf}{\normalfont\bfseries}{\mathbf}
\usepackage{booktabs}

\AtBeginDocument{%
	\setlength\abovedisplayskip{-1pt}
}

\newcommand{\hlcyan}[1]{{\sethlcolor{white}\hl{#1}}}
\newcommand{\hlgreen}[1]{{\sethlcolor{white}\hl{#1}}}

\newtheorem{theorem*}{Theorem}

\newlist{Properties}{enumerate}{2}
\setlist[Properties]{label=\textit{Property} \arabic*:,itemindent=*}

\begin{document}
\title{	Nonlinear Model Predictive Control of a Robotic Soft Esophagus}

\author{
	\vskip 1em
	
	Dipankar Bhattacharya,
	Ryman Hashem,
	Leo~K. Cheng,
	\\ and Weiliang Xu, \emph{Senior Member, IEEE}

	\thanks{
		This is the author's accepted manuscript of an article published in
		\textit{IEEE Transactions on Industrial Electronics},
		vol.~69, no.~10, pp.~10363--10373, Oct.~2022,
		doi:~\href{https://doi.org/10.1109/TIE.2021.3121755}{10.1109/TIE.2021.3121755}.
		Electronic publication date: 27~October~2021.
		This work was supported
		financially by Riddet Institute, Palmerston North, New Zealand,
		which is a national center of research excellence. (Corresponding author:
		Weiliang Xu.)
		
		D. Bhattacharya, R. Hashem and W. Xu are with the Department of Mechanical
		Engineering, University of Auckland, Auckland 1010, New Zealand
		(e-mail: dbha483@aucklanduni.ac.nz; ryman\_87@live.com; p.xu@auckland.ac.nz).

		L. K. Cheng is with Auckland Bioengineering Institute, University
		of Auckland, Auckland 1010, New Zealand (e-mail: l.cheng@auckland.ac.nz).
	}
}

\makeatletter
\def\@IEEEpubidpullup{6.5\baselineskip}
\makeatother
\IEEEpubid{%
\begin{minipage}[b]{\textwidth}
\footnotesize\raggedright
\textcopyright~2021 IEEE. Personal use of this material is permitted.
Permission from IEEE must be obtained for all other uses, in any current or future media,
including reprinting/republishing this material for advertising or promotional purposes,
creating new collective works, for resale or redistribution to servers or lists,
or reuse of any copyrighted component of this work in other works.\\
Accepted manuscript. Version of record:
\href{https://doi.org/10.1109/TIE.2021.3121755}{doi:10.1109/TIE.2021.3121755}.
\end{minipage}%
}

\maketitle

\begin{center}
\small\bfseries
\href{https://bhattner143.github.io/rosev2-dtsindyc.github.io/}{[Project Page]}
\quad
\href{https://doi.org/10.1109/TIE.2021.3121755}{[Paper]}
\quad
\href{https://github.com/bhattner143/SINDYc_MPC_RoSE_symmetric_peristaltic}{[Code]}
\end{center}
\vspace{0.35em}

\begin{abstract}
Strictures caused by esophageal cancer can narrow down the
esophageal lumen, leading to dysphagia. Palliation of dysphagia has driven the development of a Robotic Soft Esophagus (RoSE), which provides a novel in vitro platform for esophageal stent testing and food viscosity studies. In RoSE, peristaltic wave generation and control were done in an open-loop manner since the conduit lacked visibility and embedded sensing capability.  Hence, in this work, RoSE version 2.0 (RoSEv2.0) is designed with embedded Time Of Flight (TOF) and pressure sensors to measure conduit displacement and air pressure, respectively, for modeling and control. Model Predictive Control (MPC) of RoSEv2.0 is implemented to govern the peristalsis and air pressure profile autonomously. The implemented MPC used Sparse Identification Nonlinear Dynamics with Control (SINDYC) models to estimate the future states of ROSEv2.0. The dynamic models are discovered from the TOF and pressure sensor data. Peristalsis waves of speed 20 mm.s$ ^{-1} $, wavelength 75  mm, and amplitudes 5, 7.5 , and 10  mm were successfully generated by the MPC.  Additionally, RoSEv2.0 with the MPC was employed to perform stent migration testing with various food boluses consistencies. The major contribution claimed in this paper is the application of \ac{SINDYC}-based \ac{MPC} to solve the closed-loop control problem of RoSE for achieving desired peristaltic waves. 

\end{abstract}

\begin{IEEEkeywords}
Machine Learning, Model Predictive Control, Predictive Models, Robotic Soft Esophagus, Soft robotics, Time of Flight Distance Sensor
\end{IEEEkeywords}

\markboth{}{}

\definecolor{limegreen}{rgb}{0.2, 0.8, 0.2}
\definecolor{forestgreen}{rgb}{0.00, 0.31, 0.0}
\definecolor{greenhtml}{rgb}{0.0, 0.5, 0.0}
\definecolor{myred}{rgb}{0.81, 0.0, 0.0}

\begin{acronym}[TDMA]
	\acro{ABS}{Acrylonitrile Butadiene Styrene}
	\acro{AC}{Abdominal Compression}
	\acro{ADC}{Analog to Digital Converter}
	\acro{ANN}{Artificial Neural Networks}
	\acro{ASR}{Automatic Symbolic Regression}
	\acro{BIBPS}{Baseline of Intra-Bolus Presssure Signature}
	
	\acro{CAD}{Computer-Aided Design}
	\acro{CeG}{CNT embroidered Graphane}
	\acro{CFM}{Cost Function Minimization}
	\acro{CNT}{Carbon Nanotube}
	\acro{COF}{Chronic Outward Force}
	\acro{CPG}{Central Pattern Generator}
	\acro{CSV}{Comma-separated Values}
	
	\acro{DAC}{Digital to Analog Converter}
	\acro{DE}{Differential Equation}
	\acro{SPDT}{Single Pole Double Throw}
	\acro{DT}{Discrete-Time}
	\acro{DTDE}{Discrete Time Differential Equation}
	\acro{DTSINDYC}{Discrete Time SINDYC}
	\acro{DOF}{Degree of Freedom}
	
	\acro{EC}{Esophageal Cancer}
	\acro{FDA}{Food and Drug Administration}
	\acro{FEA}{Finite Element Analysis}
	\acro{FEM}{Finite Element Modeling}
	\acro{FM}{Functional Margin}
	\acro{FSP}{Force Sensing Potentiometer}
	
	\acro{GM}{Geometric Margin}
	\acro{GPN}{Graphene Porous Network}
	\acro{HF}{Hoop Force}
	\acro{HRM}{High Resolution Manometry}
	
	\acro{I2C}{Inter-Integrated Circuit}
	\acro{IBPS}{Intra-Bolus Presssure Signature}
	\acro{IDC}{Insulation-Displacement Connector}
	\acro{IDLE}{Integrated Development
		and Learning Environment}
	\acro{ILPS}{Intraluminal Presssure Signature}
	\acro{IID}{Independently and Identically Distributed}
	\acro{IoT}{Internet of Things}
	\acro{IW}{Indentation Wave}
	
	\acro{LASSO}{Least Absolute Selection And Shrinkage Operator}
	\acro{LS}{Least Squares}
	\acro{MHB}{Multivariable Harmonic Balance}
	\acro{MIBPS}{Maximum of Intra-Bolus Presssure Signature}
	\acro{ML}{Machine Learning}
	\acro{MPC}{Model Predictive Control}
	\acro{MRI}{Magnetic Resonance Imaging}
	\acro{MIMO}{Multiple Input-Multiple Output}
	
	\acro{NRMSE}{Normalized Root Mean Square Error}
	
	\acro{OSS}{Optimal State Selection}
	
	\acro{PCR}{Principal Component Regression}
	\acro{PDMS}{Liquid Polydimethylsiloxane}
	\acro{PILPS}{Peak of Intraluminal Presssure Signature}
	\acro{PID}{Proportional-Integral-Derivative}
	\acro{PMA}{Pneumatic Muscle Actuator}
	\acro{POD}{Principle Orthogonal Decomposition}
	\acro{PP}{Primary Peristalsis}
	\acro{PPW}{Primary Peristaltic Wave}
	\acro{PSD}{Power Spectral Density}
	\acro{QRoSE}{Quarter-Robotic Soft Esophagus}
	\acro{QCQP}{Quadratically Constrained Quadratic Programming} 
	\acro{QP}{Quadratic Programming}
	
	\acro{RRF}{Radial Resistive Force}
	\acro{RCT}{Randomized Controlled Trials}
	\acro{RF}{Radial Force}
	\acro{RMSE}{Root Mean Square Error}
	\acro{ROM}{Reduced-Order Modeling}
	\acro{RoSE}{Robotic Soft Esophagus}
	\acro{RoSEv2.0}{Robotic Soft Esophagus version 2.0}
	\acro{ROT}{Region of Transition}
	\acro{RTV}{Room-Temperature-Vulcanizing}
	
	\acro{SEMS}{Self-Expandable Metallic Stent}	
	\acro{SEPS}{Self-Expandable Plastic Stent}
	\acro{SI}{System Identification}
	\acro{SINDY}{Sparse Identification of Nonlinear Dynamics}
	\acro{SINDYC}{Sparse identification of Nonlinear Dynamics with Control}
	\acro{SL}{Supervised Learning}
	\acro{SMA}{Shape Memory Alloy}
	\acro{SNT}{Silver Nanotube}
	\acro{SP}{Secondary Peristalsis}
	\acro{SPI}{Serial Peripheral Interface}
	\acro{SPW}{Secondary Peristaltic Wave}
	\acro{SR}{Symbolic Regression}
	\acro{STLSR}{Sequentially Thresholded Least Squares Regression}
	\acro{STRR}{Sequentially Thresholded Ridge Regression}
	\acro{SV}{Singular Value}
	\acro{SVD}{Singular Value Decomposition}
	\acro{SVM}{Support Vector Machines}
	
	\acro{TIFF}{Trajectory Induced Frictional Force}
	\acro{TOF}{Time of Flight}
	\acro{TPCR}{Total Principle Component Regression}
	\acro{TR}{Trust Region}
	\acro{TRRM}{Trust-Region-Reflective Method}
	\acro{UL}{Unsupervised Learning}
	
	\acro{VPS}{Valve Pressure Sensor}
	
\end{acronym}	

\acresetall


\section{Introduction}
\IEEEpubidadjcol

Esophageal cancer can cause esophageal stricture, which distresses lumen patency, leading to dysphagia \cite{Garcia2010}. Esophageal strictures can be addressed by implanting an endoprosthetic stent in the esophagus, which can hold open the esophagus and provide relief to patients \cite{hanawa2009materials}. However, stent migration caused by the stent interaction with the continuous peristaltic waves, is a significant shortcoming of concern to the patients \cite{sharma2010role}. 

\hlcyan{The migration can be minimized by improving stent designs, but evidence to determine which stent design is better than the other is confined to stent analytical} \cite{hirdes2013vitro} \hlcyan{and numerical modeling} \cite {garbey2016esophageal} \hlcyan{studies because in vivo stent testing can raise major ethical concerns. Garbey et al.} \cite{garbey2016esophageal} \hlcyan{and Mozafari et al.} \cite{mozafari2018migration} \hlcyan{are two relevant numerical studies that have performed stent testing on numerical esophageal models. While the former research focused on stent migration, the latter identified stent characteristics that significantly impacted stent migration.}

A \ac{RoSE} \hlcyan{has been developed as an alternative platform for conducting stent testing under bolus swallow conditions} \cite{bhattacharya2020rose}. \hlcyan{Since RoSE has been designed to mimic the human swallow behavior, matching the attributes of the biological esophagus; thus, instead of actual patients, RoSE has been used to conduct the study on the various stent designs before implanting them in patients with malignant esophageal strictures} \cite{bhattacharya2020rose}. 

\hlgreen{The dynamic modeling of soft and continuum robots like the RoSE has been challenging and remains an active research topic} \cite{george2018control}. \hlgreen{For soft continuum robots' controller design, many sensors are typically required to measure the high dimensional states to provide the controller with state feedback. Hence, several authors have considered task-space closed-loop control that does not require a dynamic model and a large number of sensors and can provide a robust and simplistic solution to the control problem without any previous information of the robots' dynamics. While Lee et al.} \cite{lee2017nonparametric} \hlgreen{proposed a model-less control framework for a soft fluid-driven continuum robot to follow a defined path accurately, Li et al.} \cite{li2017model} \hlgreen{introduced an adaptive Kalman-based controller to estimate the Jacobian of a} \ac{PMA}-\hlgreen{driven continuum robot.} 

{To avail from the vast knowledge of prevailing automatic control theory, soft robotics dynamic modeling is required since the more information a controller has about the plant, the better the tracking result} \cite{braganza2007neural}. {Additionally, unlike model-free control, model-based control approaches also take the compliant nature of such robots into consideration, and the controller can be operated in a high-speed environment. There is a growing body of literature that recognizes the importance of model-based control techniques, such as optimal control} \cite{dullerud2013course} and \ac{MPC} \cite{mayne2000constrained} {because they are capable of routinely taking actual plant constraints into account in time.}

\ac{MPC} is an advanced and well-known technique of controlling highly nonlinear processes with constraints \cite{mayne2000constrained,clarke1987generalizeda}. 
Based on the past observations, MPC employs the concept of receding horizon (forward shifting of prediction horizon) to anticipate potential disturbances and thereby to generate control sequence that is a proposition among possible alternatives that can manage disturbances well at the next time step \cite{mayne2000constrained,garcia1989model}. {Unlike traditional control theory, MPC is effective if there is a process dead-time or if the setpoint/reference trajectory is well defined ahead of time. Additionally, it provides the flexibility to formulate and modify the objective function for optimal control. }

Since the $ 1980 $s, there is a dramatic increase in the application of \ac{MPC} in industries \cite{garcia1989model} like power systems \cite{golbert2004model}, and robotics \cite{hou2020underwater,best2016new}, because \ac{MPC} can model and thereby predict nonlinear and non-minimum phase dynamics systems very accurately. Advanced control techniques like the \ac{MPC} relies on developing a suitable plant model for optimum performance. Soft robots like the \ac{RoSE} \cite{Dirven2014} and soft robotic gastric simulator \cite{dang2020sogut} manufactured by soft elastomers like silicone rubber, cannot be modeled from the perspective of first principles due to their complex, continuous, and highly compliant intrinsic deformation, which leads to infinite degrees of freedom \cite{rus2015}. 
%
%
%
With the increasing complexity of the soft robots and rigid performance specifications, the researchers have revealed that \ac{MPC} implementation with improved predictive model can significantly enhance the control performance of complex nonlinear systems \cite{andersen1992evaluating,best2016new}. 

Data-driven techniques are not a panacea for all engineering problems, but if used correctly, it provides a way to leverage the collected data from experiments maximally.  
The \ac{SINDYC} approach \cite{brunton2016discovering, brunton2016sparse},  has succeeded in addressing  challenges in the data-driven modeling of \ac{RoSE} \cite{bhattacharya2019sparse}. Unlike traditional black-box data-driven modeling techniques, SINDYC has demonstrated the power to identify nonlinear dynamics from data captured from a nonlinear system. The method has been applied and validated with several dynamical systems, such as the damped harmonic oscillator with linear or cubic dynamics, the Lorenz system, and identifying bifurcations and normal forms  \cite{brunton2016discovering}. Fields such as soft robotics, where there is the absence of governing equations, may benefit from methods like the SINDYC.
Bhattacharya et al. has proven that the \ac{SINDYC} algorithm promotes sparsity by identifying a few linear and nonlinear terms governing the dynamics of the \ac{RoSE} \cite{bhattacharya2019sparse}. Additionally, unlike other \ac{ML} techniques like the \ac{ANN}, \ac{SINDYC} demands less training data, less execution time, and avoids overfitting \cite{kaiser2018sparse}. Integrating the \ac{RoSE} \ac{SINDYC} model in the \ac{MPC} framework can take advantage of the underlying dynamics in achieving the prescribed peristalsis profile for stent testing. 


The primary aim of this paper is to actuate the \ac{RoSE} conduit with pre-defined peristaltic wave trajectories by implementing \ac{SINDYC}-based \ac{MPC} framework. The fulfillment of the aim will make \ac{RoSE} capable of generating various peristaltic wave shapes, which will further enhance the in vitro stent migration testing potential of \ac{RoSE}. Five significant steps were followed to implement the \ac{MPC} for stent migration testing. First, \ac{RoSEv2.0} was developed with an embedded array of \ac{TOF} sensors to address the issue of limited visibility inside the \ac{RoSEv2.0} conduit. Second, \ac{RoSEv2.0} was actuated with various pressure varying trajectories and data from \acp{TOF} and \acp{VPS} were collected. Third, discrete-time formulation of \ac{SINDYC} referred to as \ac{DTSINDYC} was applied to the collected data to discover \ac{DTDE} models of \ac{RoSEv2.0}. Fourth, nonlinear \ac{MPC} using the \ac{DTDE} models were designed and implemented to control the peristaltic motion of the \ac{RoSEv2.0} conduit. Last, stent migration data for a candidate stent were recorded for various peristalsis speed and bolus swallow conditions.

The research makes several noteworthy contributions to the field of soft robotics. 
This research has solved the sensing issue for modeling and control of \ac{RoSE} by introducing \ac{RoSEv2.0} (improved version of \ac{RoSE}) with \ac{TOF} sensors.
Additionally, to the best of our knowledge, the \ac{SINDYC}-based \ac{MPC} framework presented in this research has been applied for the first time in a soft robot. 
The closed-loop control in the form of MPC has been implemented for the first time in RoSEv2.0. \hlgreen{Overall, the contribution claimed in this paper is the application of SINDYC-MPC to solve the closed-loop control problem of RoSE for peristaltic wave tracking. }

\hlgreen{Additionally, in this research, the presented modeling and control methodology is not a robot or sensor specific, and it can be extended to any soft-robotic system with any sensor provided input-output datasets can be generated. Hence to verify the versatility of this approach, MPC of RoSEv2.0 controlling the chamber air pressure was implemented with VPS by following the same implementation methodology described with the TOF sensors.}

\section{Problem formulation}\label{sec_problem_formulation}

The foundation of supervised \ac{ML} techniques like the \ac{SINDYC} is qualitative and quantitative data collection. To a greater extent, the quality of the training data determines the \ac{SINDYC} modeling generalization when applied to any soft robot like \ac{RoSE} \cite{Dirven2014}. 
The updated \ac{RoSE} (\ac{RoSEv2.0}) has $ 12 $ identical layers ($ L_{1} $ to $ L_{12} $) such that each layer contains four air pressure chambers surrounding the conduit axis symmetrically. Each horizontal whorl of four chambers at each layer is connected to an electro-pneumatic pressure valve (ITV-0030-3BS, SMC, USA) such that \ac{RoSEv2.0} has 12 independent inputs to control the actuation of \ac{RoSEv2.0}. In contrast with \ac{RoSE}, \ac{RoSEv2.0} is developed with embedded sensing capability. The significant variation between \ac{RoSE} and \ac{RoSEv2.0} is the outer casing construction, which is replaced with a transparent \ac{PDMS} (SYLGARD 182, Dow, USA) casing in \ac{RoSEv2.0} (\autoref{fig_rose_schematic}). An array of \ac{TOF} (VL6180X, STMicroelectronics, Switzerland) sensors are placed on top of the \ac{PDMS} layer to measure the conduit deformation laterally from the outside. 

\begin{figure}[!bt]
	\myfloatalign
	{\includegraphics[width=1\linewidth]{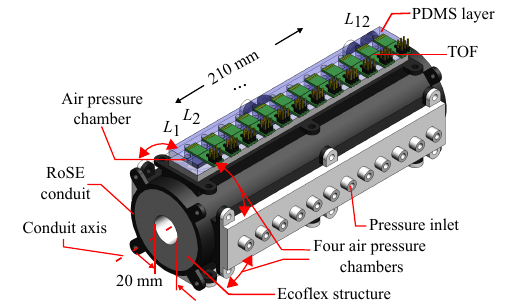}}
	\caption[Isometric view of \ac{RoSEv2.0} \ac{CAD} model.]{Isometric view of \ac{RoSEv2.0} \ac{CAD} model. 
	} 
	\label{fig_rose_schematic}
\end{figure}

The \ac{DTSINDYC} models $ M_1 $ and $ M_2 $ based on \ac{TOF} and \ac{VPS}, respectively (\autoref{tab2_model_characteristics}) for implementing the \ac{MPC} are built on a small number of datasets collected from the \ac{RoSEv2.0} sensors and aims to discover the underlying \acp{DTDE} given by \eqref{eq_dtsindyc} such that the equations best-fit the captured dataset. 
\begin{table}[!tb]
	\renewcommand{\arraystretch}{1.3}
	\caption[Characteristics of \ac{DTSINDYC} models.]{Characteristics of the \ac{DTSINDYC} models.}  \label{tab2_model_characteristics}
	\centering
	\label{tab2_model_characteristics}
	\resizebox{\columnwidth}{!}{
		\begin{tabular}{l c c c c}
			\hline\hline \\[-3mm]
			Model & \multicolumn{1}{c}{\pbox{2cm}Sensor used} & \multicolumn{1}{c}{\pbox{2cm}State variables} & \multicolumn{1}{c}{\pbox{2cm}{Representation}} & \multicolumn{1}{c}{\pbox{2cm}{Control actuation\\ \hphantom{spaces} type}}  \\ [1.6ex]\midrule
					
					$ M_1 $  & TOF  & $ x_{k,1} $, $ x_{k,2} $, $ x_{k,3} $ & Displacement (mm)&Peristaltic \\
					$ M_2 $  & VPS  & $ x_{k,1} $, $ x_{k,2} $, $ x_{k,3} $ & Pressure (kPa)&Peristaltic \\
\hline\hline
		\end{tabular}
	}
\end{table} 
\begin{align}
	\label{eq_dtsindyc}
	{\mathbf{x}}_{k+1}={\mathbf g}({\mathbf{ x}}_{k},{\mathbf{ u}}_k)	
\end{align}

In \eqref{eq_dtsindyc}, $ {\mathbf{ u}}_k\in \mathbb{R}^{n}$, and $  {\mathbf{ x}}_k\in\mathbb{R}^{n}$ represent the digital values for pressure command applied to the pneumatic valves at $ k^{th} $ time step to govern the \ac{RoSEv2.0} actuation and its corresponding \ac{TOF} or \ac{VPS} measurements, respectively. $ \mathbf{g}(.) $ represents a nonlinear polynomial function consisting of first and second order polynomial, and constant terms, mapping $ \mathbb{R}^{n}\times\mathbb{R}^{n}\rightarrow\mathbb{R}^{n} $.
Eq. \ref{eq_dtsindyc} also represents the predictive nature of the \ac{DTSINDYC} models of \ac{RoSEv2.0}, which will be used for implementing the \ac{MPC}. At any time-instant $ k $, given $ {\mathbf{ x}}_k $ and $ {\mathbf{ u}}_k $ of \ac{RoSEv2.0}, the future of the
\ac{RoSEv2.0} states denoted by $ {\mathbf{ x}}_{k+1} $ can be predicted. In this research, $ \mathbf{g}(.) $ represents two different \ac{DTSINDYC} models $ M_1 $ and $ M_2 $, generated from the \ac{VPS} and \ac{TOF} captured data, respectively.

\autoref{tab2_model_characteristics} provides the description and physical representation of the state variables associated with the models. The models are used in the \ac{MPC} for controlling conduit displacement or chamber air pressure in case of peristaltic actuation \cite{bhattacharya2020rose}. Since at least three RoSEv2.0 layers are required to match the human esophagus peristalsis wavelengths \cite{Dirven2015}, thus by emphasizing the TOF sensors measurement accuracy, layer $ L_5 $, $ L_6 $, and $ L_7 $ were chosen. The \ac{MPC} of \ac{RoSEv2.0} using $ M_1 $, or $ M_2 $ is implemented to compute the control law $ \mathbf{ u}(.|{\mathbf{ x}}_k)= \{{\mathbf{ u}}_{j+1},\cdots,{\mathbf{ u}}_{j+k},$ $ \cdots, {\mathbf{ u}}_{j+N_u}\} $ at any time-step $ j $, for achieving a prescribed peristaltic actuation profile of the robot's conduit over a control horizon $ N_u $ and prediction horizon $ N_p $, provided the current \ac{TOF} or \ac{VPS} measurement $ {\mathbf{ x}}_j $ is available. Hence, the implicit feedback law for controlling the valves and hence, controlling the \ac{RoSEv2.0} can be written as
\begin{align}
	\label{eq_mpc_control_law}
	\mathbf{ u}(j+1|\mathbf{ x}_j)=\mathbf{ u}_{j+1}	
\end{align}  

where $ {\bf u}_{j+1} $ is the first row in the optimized actuation sequence, representing pressure commands applied to the  $ L_5 $, $ L_6 $ and $ L_7 $ pneumatic valves, beginning at the initial condition $ {\bf x}_{j} $.

\section{RoSEv2.0 Fabrication, Actuation, Sensing, and State Definition}\label{sec_rosev2_construction}

\ac{RoSEv2.0} is built utilizing custom-designed mold and housings.
\ac{RTV} silicone rubber material (Ecoflex 0030, Smooth-on, USA) in its uncured state is poured into the housing, rod, and chamber-placeholder assembly and left seated for vulcanization at room temperature (\autoref{fig_rosev2_constructio} (a)). After hardening of the silicone rubber, the chamber placeholders are removed (\autoref{fig_rosev2_constructio} (b)). Simultaneously, the \ac{PDMS} (SYLGARD 182, Dow, USA) outer layer is also cast and removed from its molds in a similar manner (\autoref{fig_rosev2_constructio} (c) and (d)). Subsequently,  side  $ S_{1} $ of \ac{RoSEv2.0} and the \ac{PDMS}  layer are bonded together with an adhesive (Sil-Poxy, Smooth-On, USA) (\autoref{fig_rosev2_constructio} (e)). 
With the same adhesive on  $ S_{1} $, an array of \ac{TOF} sensors is attached on top of the \ac{PDMS} layer (\autoref{fig_rosev2_constructio} (f)). Outer covers, pneumatic tubes, and tees are attached to \ac{RoSEv2.0}  to assure that the silicone rubber deformation occurs inwards (\autoref{fig_rosev2_constructio} (g)).

\begin{figure*}[!t]
	\myfloatalign
	{\includegraphics[width=1\textwidth]{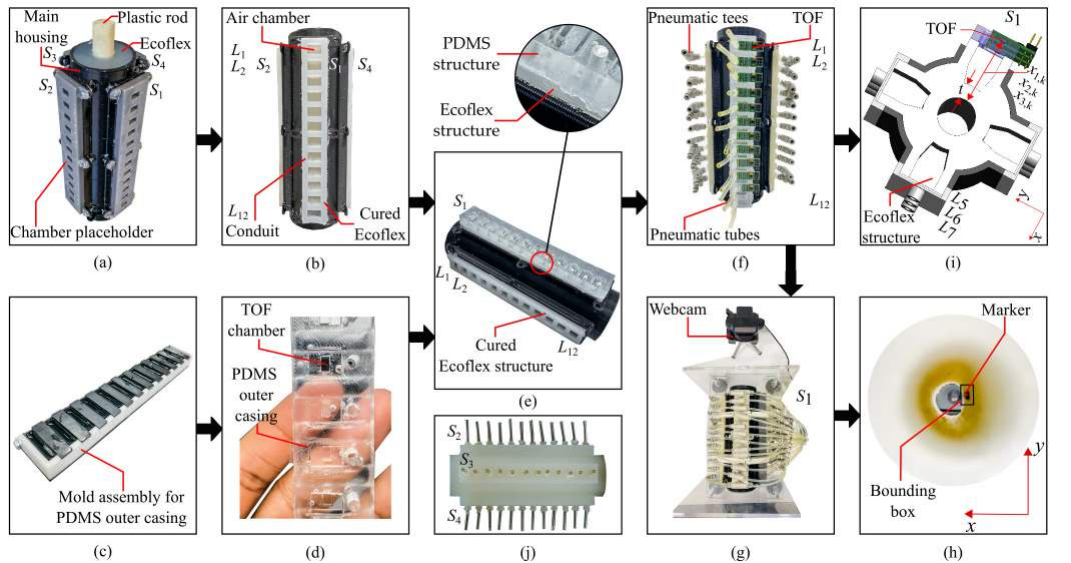}}
	\caption[Construction of \ac{RoSEv2.0}.]{(a) to (f) Construction of \ac{RoSEv2.0}. (g) to (h) Webcam setup for \ac{TOF} calibration. (i) \ac{RoSEv2.0} sectional view for defining the states. (j) {Complete rigid-boundary less Ecoflex structure of the RoSEv2.0 conduit.}} 
	\label{fig_rosev2_constructio}
\end{figure*}

The entire \ac{RoSEv2.0} conduit $ L_{1} $ to $ L_{12} $ does not contain any rigid skeletal boundary, and when pressurized with air, the whorl of four chambers per layer inflate and occlude, mimicking circular muscle activation. The complete rigid-boundary less Ecoflex structure of the conduit is shown in \autoref{fig_rosev2_constructio} (j).
{The pneumatic valves corresponding to each RoSEv2.0 layer are interfaced with a Raspberry Pi 4 Model B through an }\ac{ADC} and \ac{DAC} board. {The pneumatic valves have a set pressure range of $1 -  500 $ kPa, operates with an input signal range of $ 0 - 10  $ V, characterizing a resolution of $ 50 $ kPa.V$ ^{-1} $. Since the $ 8 $-bit DACs connected to the valves have a resolution of $ 0.02 $ V.step$ ^{-1} $ thus, each digital value applied to the DAC-valve assembly will generate a pressure with $ 1 $ kPa.step$ ^{-1} $.  For complete occlusion of the RoSEv2.0 conduit, maximum pressure of $ 47  $ kPa is required, which is regarded as the upper limit for the control signals generated by the MPC. }

Custom firmware protocols for generating peristalsis, written in Python 3.7, are developed on the Pi to actuate the robot in open-loop, with continuous, independent variable pressure trajectories as discussed in \ref{sec_problem_formulation}. By utilizing the peristaltic protocols, training and validation data for the \ac{DTSINDYC} models are collected to derive and test $  M_1$ and $ M_2 $.

The \acp{TOF} are initially calibrated for range-offset and crosstalk to operate through the \ac{PDMS}. The \acp{TOF} are further calibrated with a webcam (C922, Logitech, Switzerland) by implementing Python OpenCV CSRT tracker algorithm (\autoref{fig_rosev2_constructio} (g) and (h)). For webcam calibration, two hemispherical $ 4 $ mm retroreflective markers are placed, one for tracking and another for reference. 
Both \ac{TOF} and webcam measurements are taken simultaneously by inflating the \ac{RoSEv2.0} chambers with the peristaltic actuation protocol. The time-series measurements are compared to determine additional offsets and scaling factors.

The measured \ac{RoSEv2.0} conduit displacement data at the center of $ L_5 $, $ L_6 $, and $ L_7 $ air pressure chamber of side $ S_1 $ are considered as the states of \ac{RoSEv2.0}. 
Hence, three discrete states $x_{k,1}$, $x_{k,2} $, and $x_{k,3}  $ corresponding to layer $ L_5 $, $ L_6 $, and $ L_7 $ are defined as \ac{RoSEv2.0} states (\autoref{fig_rosev2_constructio} (i)). Each state represents $ x $-axis conduit displacement at $ L_5 $, $ L_6 $, and $ L_7 $ of $ S_1 $, respectively.
For modeling simplification, it is assumed that the conduit thickness $ t $ remains unchanged throughout its deformation. Additionally, it is also considered that the deformation of the adjacent chambers in each layer is symmetric, and displacement along $ y $ and $ z $-axis are negligible \cite{bhattacharya2019sparse}. 


\section{Discrete time SINDYC}\label{sec_dtsindyc}


The original \ac{SINDYC} algorithm can be extended to discrete-time dynamical systems, represented by \eqref{eq_dtsindyc} \cite{brunton2016extracting}. There are two primary reasons for implementing the \ac{DTSINDYC} over the conventional one. Firstly, the calculation of a derivative from noisy data is not required for the \ac{DTSINDYC} algorithm, which is a significant advantage over its continuous-time counterpart.  Secondly, a discretized model of \ac{RoSEv2.0} is necessary for the implementation of \ac{MPC}. 

The data collected from the \acp{VPS} or the \ac{TOF} ($x_{k,1}$, $x_{k,2} $, and $x_{k,3}  $, (\autoref{fig_rosev2_constructio} (i)) sensors are arranged into two $ m\times n $ matrices $ \mathbf{X}^{m}_{1}=[ \begin{matrix}
	{{\mathbf{x}}_{1}^{T}},{{\mathbf{x}}_{2}^{T}},\cdots,{{\mathbf{x}}_{m}^{T}}
\end{matrix} ]^{T}$ and  
$\mathbf{X}^{m+1}_{2}=[ \begin{matrix}
	{{\mathbf{x}}_{2}^{T}},{{\mathbf{x}}_{3}^{T}},\cdots,{{\mathbf{x}}_{m+1}^{T}}
\end{matrix} ]^{T} $. Likewise, the input data are stored in an $ m\times l $ matrix   $ 	\mathbf{U}=[ \begin{matrix}
{{\mathbf{u}}_{1}^{T}},{{\mathbf{u}}_{1}^{T}},\cdots, {{\mathbf{u}}_{m}^{T}}
\end{matrix}]^{T}$. Variables $ m $, $ n $, and $ l $ represents the number of samples, state variables ($x_{k,1}$, $x_{k,2} $, and $x_{k,3}  $, $ n=3 $), and inputs, respectively and ${{\mathbf{x}}_{q}^{}}=[x_{q,1},x_{q,2},\cdots,x_{q,n}]^{T}|q\in\{1,\cdots,m+1\}$.



In this research, the number of inputs is considered equal to the number of outputs ($ l=n $), where inputs and outputs are digital pressure commands to the valves and \ac{TOF} or \ac{VPS} measurements, respectively. The \ac{DTSINDYC} algorithm implements a library of candidate functions matrix $ \boldsymbol\Theta $ of order $ m\times p $, consisting of constant and polynomial (of different order) terms. The library matrix $ {\boldsymbol\Theta} $ can be defined as

\begin{align}
	\label{eq_lib_matrix}
	{\boldsymbol\Theta}({\mathbf X_1^{m}},{\mathbf U})={{\left[
				1,{\mathbf X_1^{m}},\mathbf{U},{\mathbf X_1^{m}}\otimes{\mathbf X_1^{m}},{\mathbf X_1^{m}}\otimes\mathbf{U}, \cdots
			\right]}} 	
\end{align}

where, $ {\mathbf{X}}_{1}^{m} \otimes \mathbf{U} $  defines all the possible product combinations of the components of $ \mathbf{X} $ and $ \mathbf{U} $.  Each column of $ {\boldsymbol\Theta}({\mathbf X_1^{m}},{\mathbf U}) $ represents the potential candidate for the final right-hand side expression for \eqref{eq_dtsindyc}. There is an enormous opportunity for decision in building the library function of nonlinearities. Thus, the system in \eqref{eq_dtsindyc} can be written as

\begin{align}
	\label{eq_Theta_Upsilon}	
	\mathbf{X}_2^{m+1}={\boldsymbol\Theta}({\mathbf X_1^{m}},{\mathbf U})\mathbf\Upsilon
\end{align}

where $ \mathbf{\Upsilon}=\left[\boldsymbol{\upsilon}_0,\boldsymbol{\upsilon}_1,\cdots,\boldsymbol{\upsilon}_{n-1}\right] $ is a sparse matrix of coefficients of order $ p\times n$ such that the sparse vectors $\{{\boldsymbol \upsilon}_i\}_{i=0}^{n-1}  $, lie in the subspace of $ {\mathbb R}^{p} $. The sparsity of $ \boldsymbol\upsilon_i $ selects the active terms from $ {\boldsymbol\Theta}({\mathbf X_1^{m}},{\mathbf U}) $ to identify the $ {{g}}_i $ of $ {x_{k+1,i}}={{g}}_i({\mathbf{x}}_k,{\mathbf{u}}_k) $ in \eqref{eq_dtsindyc}. 
The \ac{DTSINDYC} model in reveals the underlying dynamics of \ac{RoSEv2.0} in the form of \acp{DTDE}. Since only a few of the candidates will appear in the row of $ \mathbf{g} $ in \eqref{eq_dtsindyc}, a sparse regression problem can be defined and $ \mathbf\Upsilon $ can be determined by optimizing it \cite{brunton2016discovering}. 
After the determining $ \mathbf\Upsilon $, \ac{DTSINDYC} model of \ac{RoSEv2.0} can be written as

\begin{align}
	\label{eq_all_st_eq}	
	\mathbf{x}_{k+1}={\mathbf g}({\mathbf{ x}}_{k},{\mathbf{ u}}_k)={\boldsymbol\Upsilon}^{T}{\boldsymbol\Theta}({{\mathbf x}_k^{T}},{{\mathbf u}_k^{T}})^{T}
\end{align}

\section{Overview of Model Predictive Control in RoSEv2.0}\label{sec_mpc_theory}
The \ac{MPC} optimization is perfomed over a receding prediction horizon by forward shifting the prediction horizon at every time step $ N_p $. At $ j^{th} $ time instant, the \ac{MPC} applies a reference signal $ {\mathbf{ x}}_{j}^{(ref)}$ to the \ac{CFM} block (\autoref{fig_mpc}). The \ac{CFM} block minimizes a cost function $ J $ over $ N_p $. The output of the \ac{CFM} is either used as an input to the valves or  \ac{DTSINDYC} model ($ M_1$ or $ M_2 $, \autoref{tab2_model_characteristics}). Since the model in \ac{MPC} is used for prediction; thus, the model output at any instant $ j $ is denoted by $ {\hat {\mathbf{ x}}}_{j}$. Between $ j $ and $ {j+1}^{} $ time step, the \ac{SPDT} switch $ Sw_1 $ is set to the \ac{DTSINDYC} model whose output is used to calculate a set of optimal control values $\mathbf{ u}_{.|\mathbf{ x}_j}:= \{\mathbf{ u}_{j+1},\cdots,\mathbf{ u}_{j+k},\cdots,\mathbf{ u}_{j+N_c} \}$ over a control horizon $ N_c $. The behavior of \ac{RoSEv2.0} over $ N_p $ is estimated by its model. To determine the sequence of estimations $ \{\mathbf{ \hat x}_{j},\mathbf{ \hat x}_{j+1},\cdots,\mathbf{ \hat x}_{j+k},\cdots,\mathbf{ \hat x}_{j+N_p-1} \}$, $ Sw_2 $ is initially set to the \ac{RoSEv2.0} \ac{TOF} or \ac{VPS} for $\mathbf{ \hat x}_{j}$ and then to the model output for the subsequent predictions. Next, when the \ac{CFM} algorithm has solved for the best input, $ S_1 $ is moved to the valves and first control value from $ \mathbf{ u}_{j+1} $ is applied to the valves. At next time instant $ j+1$, the computation is repeated with horizon moved by one time-step. The implicit control law for each time-step is given by \eqref{eq_mpc_control_law}. The cost function optimized by the \ac{CFM} block at each time-step can be written as

\begin{figure}[!t]
	\myfloatalign
	{\includegraphics[width=0.90\linewidth]{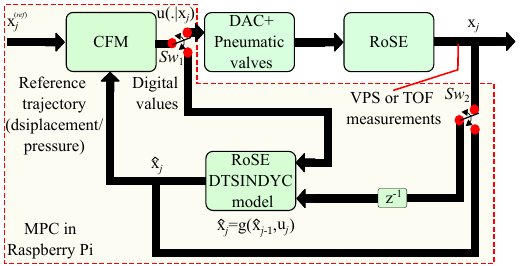}}
	\caption[ of \ac{DTSINDYC}-\ac{MPC} explaining its functional description.]{Block diagram of \ac{DTSINDYC}-\ac{MPC} explaining its functional description. 
	} 
	\label{fig_mpc}
\end{figure}

\begin{align}
	\label{eq_mpc_control_optimization}
	\begin{split}
		\arg \underset{{\mathbf{ u}_{.|\mathbf{ x}_j} }}{\mathop{\min }}\,&{{J}_{}}({\mathbf{ x}_j })=	\arg \underset{{\mathbf{ u}_{.|\mathbf{ x}_j} }}{\mathop{\min }} \left\{{\|{\mathbf {\hat x}}_{j+N_p} - {\mathbf {x}}_{j+N_p}^{(ref)}\|_{\mathbf{\tilde Q}}}+\right.\\& {\sum\limits_{k=0}^{N_p-1}}{\|\mathbf{ \hat x}_{j+k}-\mathbf{ x}^{(ref)}_{{j+k}}\|^{2}_{\mathbf{Q}}}\\
		&\left.{\sum\limits_{k=0}^{N_c-1}}\left({\|\mathbf{ u}_{j+k}\|^{2}_{\mathbf{R_u}}}+{\|\mathbf{ u}_{j+k}-\mathbf{ u}_{j+k-1}\|^2_{\mathbf{R_{\Delta u}}}}\right)\right\}\\
	\end{split}
\end{align}

\begin{align}
	\label{eq_mpc_control_law_constraints_a}
	\begin{split}
		\text{s.t. } {\mathbf{ \hat x}_{j+k}}=\mathbf{ g}({\mathbf{ \hat x}_{j+k-1}},\mathbf{ u}_{j+k-1})\text{, }
		\mathbf{ u}_{lb}\le \mathbf{ u}_k \le \mathbf{ u}_{ub}  
	\end{split}
\end{align}
At every iteration (time-step) $ j $, CFM minimizes the functional $ J $ in \eqref{eq_mpc_control_optimization}, which includes a terminal cost at $ {\mathbf {\hat x}}_{N_p} $. The leftmost term in \eqref{eq_mpc_control_optimization} penalizes the root mean square deviations of the \ac{RoSEv2.0} model predicted trajectory $ \mathbf{ \hat x}_{{j+k}} $ (given by \eqref{eq_mpc_control_law_constraints_a}), from the reference trajectory $ \mathbf{ x}^{(ref)}_{{j+k}} $. Control output sequence $ \mathbf{  u}_{{j+k}} $ and change in control output sequence $ \left(\mathbf{  u}_{{j+k}}- \mathbf{  u}_{{j+k-1}}\right)$ are also penalized by the middle and the rightmost terms in \eqref{eq_mpc_control_optimization}, respectively. The range of $ {\mathbf{ u}}_k $ at every iteration of \ac{MPC} is constrained by control bounds  ($ \mathbf{ u}_{lb} $ and $ \mathbf{ u}_{ub} $ in \eqref{eq_mpc_control_law_constraints_a}). In \eqref{eq_mpc_control_optimization}, the weight matrix $ {\mathbf{Q}} $ multiplies the deviation of the controlled variable prediction at $ j+k  ^{th} $ instant from the respective reference point. ${\mathbf{R_u}} $ and $ {\mathbf{R_{\Delta u}}} $ penalize the value of the control vector and future control moves, respectively. In \eqref{eq_mpc_control_optimization}, ${\mathbf{Q}} $ and ${\mathbf{{\tilde Q}}} $ are positive semi-definite, $ {\mathbf{R_{\Delta u}}} $ and ${\mathbf{R_u}} $ are positive definite matrices.

Closed-loop stability for online MPC is an issue if receding horizon predictive control is implemented in such a way that requires on-line redesign. In MPC, the terminal cost and terminal constraint ensure recursive feasibilty and stability of the closed-loop system, but the terminal constraint reduces the region of attraction. In practice, a sufficient prediction horizon size provides necessary condition for the stability and it makes the system stable and feasible in a (large) neighborhood of the origin. On the basis of horizon length, MPC can be categorized into finite and infinite horizon. 


Unlike finite horizon MPC, closed loop stability is inherent in infinite horizon constraint less MPC, since at  $ {k+1}^{th} $ instant, 
no new information enters the optimization problem, so the optimal trajectory determined at $ k+1 $ is same as that of the tail of $ k ^{th}$ instant \cite{Maciejowski_Predictive}.

\begin{theorem*}
	Suppose that MPC is obtained for a plant, defined by the nonlinear time-invariant equation in \eqref{eq_dtsindyc}, by minimizing the cost function in \eqref{eq_mpc_control_optimization}. If the plant model remains unchanged and stable throughout the computation of the control law, and settling to the reference point occurs within the prediction horizon, then despite the use of finite horizon, MPC is closed loop stable.
\end{theorem*}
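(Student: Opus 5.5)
The approach is the standard value-function argument for finite-horizon MPC, taken from Maciejowski: I will show that the optimal cost $J^\ast(\mathbf{x}_j)$ of \eqref{eq_mpc_control_optimization}, measured relative to the reference, is non-increasing along closed-loop trajectories, and then use it as a Lyapunov function. I work in error coordinates $\mathbf{e}_{j+k}=\hat{\mathbf{x}}_{j+k}-\mathbf{x}^{(ref)}_{j+k}$. I also write the input penalty relative to the steady-state input $\mathbf{u}^{(ref)}$ that holds the plant at the reference, since otherwise the $\mathbf{R_u}$ term cannot vanish at equilibrium. With this choice $J^\ast\ge 0$, and $J^\ast=0$ exactly at the reference equilibrium.

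The hypotheses play two roles. First, the plant model \eqref{eq_dtsindyc} is unchanged, and \eqref{eq_all_st_eq} is time-invariant, so the prediction made at step $j$ of $\hat{\mathbf{x}}_{j+1}$ coincides with the measured state fed back through $Sw_2$ at step $j+1$. Second, settling within the horizon means the optimal predicted trajectory at step $j$ reaches the reference by some step $j+N_s$ with $N_s\le N_c\le N_p$. From then on the remaining inputs stay at $\mathbf{u}^{(ref)}$, so all stage costs and the terminal cost in \eqref{eq_mpc_control_optimization} are zero from $N_s$ onward.

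The key construction is the shifted candidate at time $j+1$. Take the optimal sequence from time $j$, drop its first element (which was applied by \eqref{eq_mpc_control_law}), and append $\mathbf{u}^{(ref)}$. This candidate is feasible: it respects $\mathbf{u}_{lb}\le\mathbf{u}\le\mathbf{u}_{ub}$ as long as $\mathbf{u}^{(ref)}$ is admissible, and it generates exactly the tail of the old predicted trajectory. Because that trajectory has already settled before the end of the horizon, the extra stage appended at the end costs nothing. Hence
\begin{align*}
J^\ast(\mathbf{x}_{j+1})\le J(\text{candidate}) = J^\ast(\mathbf{x}_j)-\|\mathbf{e}_j\|^2_{\mathbf{Q}}-\|\mathbf{u}_j-\mathbf{u}^{(ref)}\|^2_{\mathbf{R_u}}-\|\Delta\mathbf{u}_j\|^2_{\mathbf{R_{\Delta u}}}.
\end{align*}
This is the finite-horizon analogue of the statement above that ``no new information enters the optimization''. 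The same shift also gives recursive feasibility.

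To finish, $J^\ast$ is bounded below and non-increasing, so it converges. The decrement terms are therefore summable, which gives $\mathbf{Q}^{1/2}\mathbf{e}_j\to0$ and $\mathbf{u}_j\to\mathbf{u}^{(ref)}$ by positive definiteness of $\mathbf{R_u}$. The remaining gap is that $\mathbf{Q}$ is only positive semidefinite, so convergence of the weighted error does not by itself give convergence of the state. Here I would invoke the stability of the plant model assumed in the statement. With inputs converging to $\mathbf{u}^{(ref)}$, a stable plant (I read this as input-to-state stable) must have its state converge to the corresponding equilibrium. Combined with a detectability-type condition on $(\mathbf{g},\mathbf{Q})$, this upgrades the argument to asymptotic stability.

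I expect the \textbf{main obstacle} to be the interpretation and use of ``settling within the prediction horizon''. The argument needs it to hold for the optimal predicted trajectory at every step, not just at the first one. My plan is to state it as a standing hypothesis on each $J^\ast$, which is how Maciejowski treats it, rather than to derive it.
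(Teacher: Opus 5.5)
Your argument is sound in outline, but it takes a different route from the paper, and one step needs a small repair.

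\textbf{Comparison with the paper.} The paper never runs a value-function decrease. It writes the infinite-horizon version of the cost and uses the settling hypothesis to hold the input at $\mathbf u_c=\mathbf 0$ for $k\ge N_c-1$. It then linearizes at the origin as $\mathbf x_{k+1}\approx\mathbf A\mathbf x_k$ and sums the remaining state tail in closed form as $\hat{\mathbf x}_{j+N_c}^{T}\tilde{\mathbf Q}\hat{\mathbf x}_{j+N_c}$, where $\tilde{\mathbf Q}=\sum_{i\ge 0}(\mathbf A^{T})^{i}\mathbf Q\mathbf A^{i}$ solves $\mathbf A^{T}\tilde{\mathbf Q}\mathbf A=\tilde{\mathbf Q}-\mathbf Q$. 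This identifies the finite-horizon cost \eqref{eq_mpc_control_optimization} with terminal weight $\tilde{\mathbf Q}$ with an infinite-horizon cost. Stability is then inherited from the remark before the theorem: infinite-horizon MPC is stable because the new optimum is the tail of the old one.

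Your shifted-candidate inequality makes that tail argument explicit, with settling acting as an implicit terminal equality constraint. So the terminal weight is irrelevant and no linearization is needed. You get a result for the nonlinear $\mathbf g$ and nonzero setpoints wherever your hypotheses hold, recursive feasibility comes as a by-product, and plant stability enters only at the end.

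The paper's route buys the converse: stability of $\mathbf A$ does the work, which explains why the implementation computes $\tilde{\mathbf Q}$ from the Lyapunov equation. That identity also removes the per-step settling requirement you flag as the main obstacle, at least for the linearized model. With the input parked, $\|\hat{\mathbf x}\|^{2}_{\mathbf Q}+\|\mathbf A\hat{\mathbf x}\|^{2}_{\tilde{\mathbf Q}}=\|\hat{\mathbf x}\|^{2}_{\tilde{\mathbf Q}}$, so the appended stage costs nothing extra even when the predicted state has not reached the reference.

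\textbf{The step that needs repair.} Because of the $\mathbf{R_{\Delta u}}$ term, ``settled'' must include the input. You need $\mathbf u_{j+N_c-1}=\mathbf u^{(ref)}$, i.e.\ $N_s\le N_c-1$, which is the paper's ``$\forall k\ge N_c-1$''. If $N_s=N_c$ (with $N_c=N_p$, as in the experiments), the last free move can differ from $\mathbf u^{(ref)}$. The appended move then costs $\|\mathbf u^{(ref)}-\mathbf u_{j+N_c-1}\|^{2}_{\mathbf{R_{\Delta u}}}>0$, and your equality for $J(\text{candidate})$ fails. Relatedly, $J^\ast$ is a function of $(\mathbf x_j,\mathbf u_{j-1})$, so the Lyapunov argument lives on that augmented state.

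\textbf{Smaller points.} ISS alone already gives $\mathbf x_j\to\mathbf x^{(ref)}$ from $\mathbf u_j\to\mathbf u^{(ref)}$ through its asymptotic-gain property, so detectability of $(\mathbf g,\mathbf Q)$ is not needed for convergence. What Lyapunov stability still requires is a bound on $J^\ast$ near the equilibrium, because $\sup_j\|\mathbf u_j-\mathbf u^{(ref)}\|^{2}\le J^\ast(\mathbf x_0,\mathbf u_{-1})/\lambda_{\min}(\mathbf{R_u})$ is what feeds the ISS gain. Finally, re-centring $\mathbf{R_u}$ at $\mathbf u^{(ref)}$ changes \eqref{eq_mpc_control_optimization} unless $\mathbf u^{(ref)}=\mathbf 0$. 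That is exactly the paper's normalization $\mathbf x^{(ref)}=\mathbf 0$, $\mathbf u_c=\mathbf 0$, so your proof does cover the paper's setting.
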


\begin{proof}
	The infinite horizon formulation of \eqref{eq_mpc_control_optimization} can be written as
	
	\begin{gather}
		\label{eq_mpc_control_optimization_2}
		\begin{split}
			&V_{mpc}(j)=	{\sum\limits_{k=N_c}^{\infty}}{\|\mathbf{ \hat x}_{j+k}-\mathbf{ x}^{(ref)}_{{j+k}}\|^{2}_{{\mathbf{Q}}}}+{\sum\limits_{k=0}^{N_c-1}}{\|{\mathbf{ \hat x}}_{j+k}^{2}}-\\
			&{\mathbf{x}}_{j+k}^{(ref)}\|_{\mathbf Q}+{\sum\limits_{k=0}^{N_c-1}}\left({\|\mathbf{ u}_{j+k}\|^{2}_{\mathbf{R_u}}}+{\|\mathbf{ u}_{j+k}-\mathbf{ u}_{j+k-1}\|^2_{\mathbf{R_{\Delta u}}}}\right)
		\end{split}
	\end{gather}
	
	Let, $ \mathbf{ u}_{j+k} = {\mathbf u}_c$ such that $ {\mathbf u}_c $ is the control input required to fix $ \mathbf{ \hat x}_{j+k} $ at $ \mathbf{ \ x}^{(ref)}_{j+k} $ $\forall k\ge N_c-1$. Hence, change in $  \mathbf{ u}_{j+k} $ is also zero $\forall k\ge N_c-1$. Assume ${\mathbf x}^{(ref)}=\mathbf 0$ and hence, $ {\mathbf u}_c = \mathbf 0$ is required to drive the model output to zero. Therefore, for $ k>{N_c} $ only first term of \eqref{eq_mpc_control_optimization_2} given by $\sum_{k=N_c}^{\infty}{\|{{\mathbf {\hat x}}}_{j+k}\|}^{2}_\mathbf{Q}$ remains, which is a matrix geometric mean that converges if the plant is stable. In the vicinity of the origin and zero input, a nonlinear system like \eqref{eq_dtsindyc} can be approximated as that of a linear system $ {\mathbf x}_{k+1}\approx\mathbf {A}{\mathbf x}_{k} $. If the linearized system is stable then all the eigenvalues of $ \mathbf A $ lies inside the unit circle and hence, $\sum_{k=N_c}^{\infty}{\|{{\mathbf {\hat x}}}_{j+k}\|}^{2}_\mathbf{Q}$ can be written as
	
	\begin{align}
		\label{eq_lyapunov equation_a}
		\begin{split}
			\sum_{k=N_c}^{\infty}{\|{{\mathbf {\hat x}}}_{j+k}\|}^{2}_\mathbf{Q}={{\mathbf {\hat x}}}_{j+N_c}^{T}\left[\sum_{i=0}^{\infty}{({\mathbf A}^{T}})^{i}{\mathbf Q}
			{\mathbf A}^{i}\right]{{\mathbf {\hat x}}}_{j+N_c}
		\end{split}
	\end{align}	
	
	In \eqref{eq_lyapunov equation_a}, let $  {\mathbf {\tilde Q}} = \sum_{i=0}^{\infty}{({\mathbf A}^{T}})^{i}{\mathbf Q}
	{\mathbf A}^{i}$ then, $ {\mathbf {\tilde Q}} $ satisfies the discrete-time Lyapunov equation \cite{sahi2004} given by 
		
	\begin{align}
	\label{eq_lyapunov equation}
	\begin{split}
		{\mathbf A}^{T}{\mathbf {\tilde Q}}{\mathbf A}={\mathbf {\tilde Q}}-{\mathbf Q} 
	\end{split}
	\end{align}
	In \eqref{eq_lyapunov equation}. $ {\mathbf {\tilde Q}} $ is a positive semi-definite matrix, if $ {\mathbf {Q}} $ is positive semi-definite. Therefore, \eqref{eq_mpc_control_optimization_2} can be re-written in a generalized form as
	\begin{eqnarray}
		\label{eq_mpc_control_optimization_3}
		\begin{split}
			&V_{mpc}(j)={\|\mathbf{ \hat x}_{j+{N_c}}\|^{2}_{{\mathbf{\tilde Q}}}}+	{\sum\limits_{k=0}^{N_c-1}}{\|{\mathbf{ \hat x}}_{j+k}^{2}}\|_{\mathbf Q}\\&+{\sum\limits_{k=0}^{N_c-1}}\left({\|\mathbf{ u}_{j+k}\|^{2}_{\mathbf{R_u}}}+{\|\mathbf{ u}_{j+k}-\mathbf{ u}_{j+k-1}\|^2_{\mathbf{R_{\Delta u}}}}\right)
		\end{split}
	\end{eqnarray}
	When ${\mathbf x}^{(ref)}=\mathbf 0$, \eqref{eq_mpc_control_optimization_3} and \eqref{eq_mpc_control_optimization} are similar, where the predictive control problem in \eqref{eq_mpc_control_optimization_3} is formulated over $ N_c $. 
\end{proof}




\section{MPC Design and Implementation in RoSEv2.0} \label{sec_mpc_design}
This section ties together \ref{sec_rosev2_construction} to \ref{sec_mpc_theory} in order to design and implement the \ac{MPC} in \ac{RoSEv2.0}.
In the domain of mathematics, the biological peristaltic waves are typically modeled as per the sinusoidal waveform given by \eqref{eq_wavefrontmodel} \cite{misra2001mathematical,takagi2011peristaltic}. 

\setlength\arraycolsep{1.0pt}
\begin{eqnarray}
	\label{eq_wavefrontmodel}
	H(z,t)=\begin{cases}
		\epsilon+ \frac{a}{2}\left[1-cos\left(2\pi\frac{z-ct}{w}\right)\right], {ct\le z\le (ct+w)} \\
				\epsilon\hspace{104pt}, \text{elsewhere }
	\end{cases}
\end{eqnarray}


In \eqref{eq_wavefrontmodel}, $\epsilon$ is minimum radius (mm), $z$ is axial displacement (mm), $t$ is time ($s$), $H(z,t)$          		is conduit's radius as a function of $ x $ and $t$, $a$ is amplitude of the wave (mm), $c$ is velocity of the travelling wave (mm.s$ ^{-1}$), and $w$ wavelength (mm). \autoref{eq_wavefrontmodel} is a modified version of the orginal equation described by \cite{misra2001mathematical,Dirven2015} to incorporate the entire ascending half-cycle (full wavelength). By considering $ z=z_d $, ($ z_d $ is the axial adjacent chamber spacing, which is $ 15 $ mm), $ t_d=z_d/c $ continuous time frequency $ f_d=c/w $, and $ t=kT_s $, discrete-time form of \eqref{eq_wavefrontmodel} can be written as

\begin{align}
\label{eq_wavefrontmodel_4}
	H(k)=\epsilon+\frac{a}{2}\left[1-cos\left\{\omega\left({k-\frac{t_d}{T_s}}\right)\right\}\right]
\end{align}

where, discrete frequency $\omega=2\pi f_d/F_s  $, sampling frequency $ F_s = 1/T_s$, and $ t_d/T_s\le k \le (t_d/T_s + 1/T_sf_d) $. The peristaltic waveform profile, given by  \eqref{eq_wavefrontmodel_4}
can be accomplished by controlling time-varying pressure trajectories into the adjacent whorl of chambers simultaneously (\autoref{fig_mpc_implementation}). The pneumatic valves associated with each layer of \ac{RoSEv2.0}, command the pressure profile in each layer. The role of the \ac{MPC} is to find out the optimal control law in terms of digital values, which are supplied to the valves to achieve peristaltic wave shapes, defined by the parameters ($ c $ and $ w/2 $) of \eqref{eq_wavefrontmodel_4}. 

Next, three time-shifted versions of \eqref{eq_wavefrontmodel_4} for layer $ L_5 $, $ L_6 $, and $ L_7 $ of \ac{RoSEv2.0} are formulated ($ x^{(ref)}_{k,n}=H_n(k)|n=1,2,3$) as per \eqref{eq_wavefrontmodel_5}. 

\begin{align}
	\label{eq_wavefrontmodel_5}
	x^{(ref)}_{k,n}=\epsilon+\frac{a}{2}\left[1-cos\left\{\omega\left({k-(n-1)\frac{t_d}{T_s}}\right)\right\}\right]
\end{align}
In \eqref{eq_wavefrontmodel_5}, $ (n-1){t_d}/{T_s}\le k \le (n-1)\left({t_d}/{T_s}+{1}/{Tsf_d}\right)|n=1,2,3 $. At a time-instant $ j $, the \ac{MPC} loop receives three present  ($\{x^{(ref)}_{j,n}=H_{j,n}|n=1,2,3\}$, \eqref{eq_wavefrontmodel_5}) and a set of future reference signal values, and feedback value ($\{x^{}_{j-1,n}|n=1,2,3\}$) from \ac{TOF} sensors or \acp{VPS} located at layer $ L_5 $, $ L_6 $ and $ L_7 $ of \ac{RoSEv2.0} (\autoref{fig_mpc_implementation}). The blocks within the red dashed section of \autoref{fig_mpc} represents the expanded \ac{MPC} loop in \autoref{fig_mpc_implementation}. By following the minimization procedure discussed in \ref{sec_mpc_theory}, the \ac{MPC} loop generates three control values $ u_{j+1,1} $, $ u_{j+1,2} $ and $ u_{j+3,3} $ , which are stored in the $ L_2 $, $ L_3 $ and $ L_4 $ index of the accumulator. $  L_1 $ index is not used since bolus feeder pipes are inserted in \ac{RoSEv2.0} conduit and the pipes coincide with $  L_1 $ and $  L_{12} $ conduit locations. Hence, to keep the orientation of the feeder pipes unchanged, layers $  L_1 $ and $  L_{12} $ are not actuated.

\begin{figure}[!bt]
	\myfloatalign
	{\includegraphics[width=0.95\linewidth]{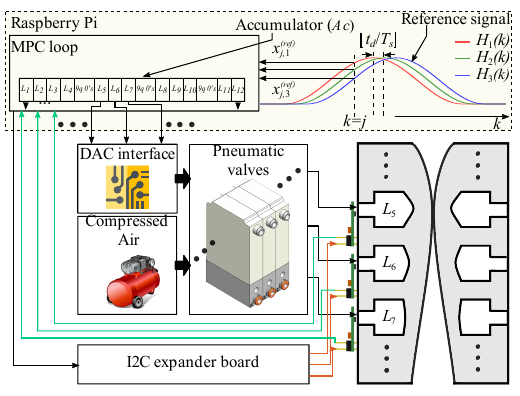}}
	\caption[Block diagram of \ac{DTSINDYC}-\ac{MPC} design and implementation in \ac{RoSEv2.0}.]{Block diagram of \ac{DTSINDYC}-\ac{MPC} design and implementation in \ac{RoSEv2.0}.} 
	\label{fig_mpc_implementation}
\end{figure}

In the next iteration ($j+1$), a new set of control signals ($ u_{j+2,1} $, $ u_{j+2,2} $ and $ u_{j+2,3} $) are generated and the earlier set of signals are right-shifted by three in the accumulator. It takes $3q+1$ shifts (iterations) for $ u_{j+1,1} $, $ u_{j+1,2} $ and $ u_{j+1,3} $ to reach $ L_5 $, $ L_6 $ and $ L_7 $ respectively, where $q$ is the number of time steps between the peaks of consecutive reference signal given by discrete time delay $q=\lfloor{t_d/T_s}\rfloor$ of \eqref{eq_wavefrontmodel_4}. If $ c=20 $ mm/s, $ x_d =15$ mm $ w=120 $ mm and $ T_s =0.1$ s, then $ t_d =0.75$ s, $ f_d=1/6 $ s$ ^{-1} $ and $ q= 7$. Here, $ Ts $ is the time taken by an \ac{MPC} loop  to complete one cycle, which can vary upon conditions like reference signal type and model complexity, prediction horizon and input bounds. With this kind of approach, the \ac{MPC} has been made robust to adapt for various peristalsis waveform speeds and wavelengths. The only downside to this approach is difficulty in achieving higher wave speeds if $ T_s $ increases. For best and fast optimization, the \ac{MPC} was tuned for various prediction horizon lengths, weight matrices and control input bounds.

\section{Results}

\subsection{DTSINDYC Models for Peristaltic Actuation}\label{section_dtsindyc_models}

This section presents the results with the \ac{DTSINDYC} modeling methodology presented in \ref{sec_dtsindyc}. Layer $ L_5 $, $ L_6 $, and $ L_7 $ of the conduit were actuated with time-shifted staircase waveforms of varying amplitude and time (\autoref{fig_dtsindyc_tof_states_3} (a)) to apply \ac{DTSINDYC}. It was found that to profile the conduit as per \eqref{eq_wavefrontmodel_5} with different waveform speed and diameter; the conduit layers must be actuated with the waveforms as mentioned earlier. The amount of time-shift governs the speed of the peristalsis in \ac{RoSEv2.0} and the range of the peristalsis speed was obtained from the \ac{RoSE} articulography results presented by Dirven et al. \cite{Dirven2015}. {In this research, each step of digital value applied to the DACs and valves assembly has generated a corresponding pressure with $ 1 $ kPa.step$ ^{-1} $} (Refer to \ref{sec_rosev2_construction}). The maximum actuation pressure has been restricted to $ 47 $ kPa, which is the pressure needed for full \ac{RoSEv2.0} conduit occlusion.

\begin{figure}[!t]
	\myfloatalign
	{\includegraphics[width=0.8\linewidth]{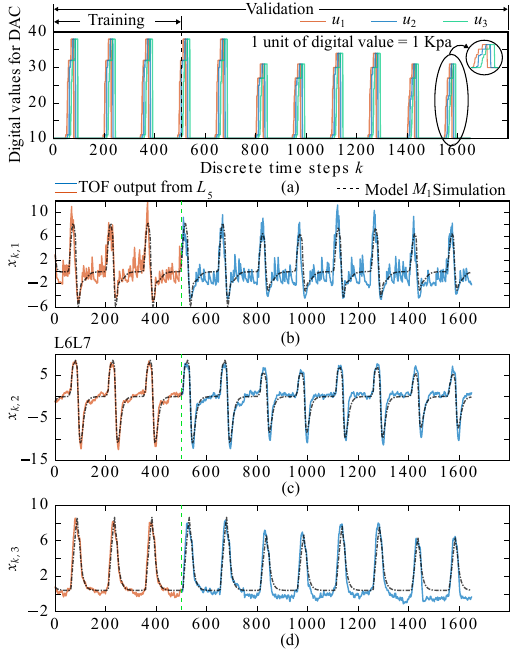}}
	\caption[Plots illustrating the training and validation of models $M_1$ .]{Plots illustrating the training and validation of model $M_1$. (a) The input dataset applied to the $L_5$, $L_6$ and $L_7$ \acp{DAC} to generate the output dataset shown by blue and orange plots in (b) to (d). (b) to (d) The dashed black line represents the one-step ahead prediction of $M_1$.} 
	\label{fig_dtsindyc_tof_states_3}
\end{figure}

A dataset of size $4350 \times 3$ comprising various amplitude and time-shifted staircase waveform levels (\autoref{fig_dtsindyc_tof_states_3} (a)) and their respective \ac{TOF} output was collected (orange and blue waveforms in \autoref{fig_dtsindyc_tof_states_3} (b) to (d)). \hlcyan{The blue and orange color plots in} \autoref{fig_dtsindyc_tof_states_3} \hlcyan{(b) to (d) also depict the open-loop response of RoSEv2.0, which was collected as a time-series dataset and utilized for training the DTSINDYC model $ M_1 $.}

\hlcyan{In the first five cycles of} \autoref{fig_dtsindyc_tof_states_3} \hlcyan{(b) to (d), layer $ L_5 $, $ L_6 $, and $ L_7  $ have shown displacements up to $ 10\pm1.6 $, $ 8.2\pm0.1 $, and $ 8.2\pm0.1 $ mm, respectively, when actuated with digital DAC values of $ 38 $ (or $ 38 $ kPa). In the absence of a closed-loop controller, it was challenging to determine the needed digital DAC value for full conduit closure (i.e., displacement of $ 10$ mm) without damaging the robot. The MPC for RoSEv2.0 is implemented to determine the optimal digital DAC values and hence pressure values required in the air chambers of each layer to achieve the prescribed peristalsis. }


Model $ M_1 $ discovered by applying \ac{DTSINDYC} \cite{desilva2020pysindy} to the collected dataset is given as

\begin{eqnarray}
	\label{eq_M3}
	\begin{bmatrix}
		x_{k+1,1} 
		\\x_{k+1,2}
		\\x_{k+1,3}  
	\end{bmatrix}
	&=\begin{bmatrix}
		0.85 & 0.04 & (-0.01-0.02x_{k,2})\\ 
		0.02 & 0.84 & (-0.17-0.02x_{k,2})\\ 
		0 &0  & 0.91 
	\end{bmatrix}
	\begin{bmatrix}
		x_{k,1} 
		\\x_{k,2}
		\\x_{k,3}  
	\end{bmatrix}\nonumber
	\\&+
	\begin{bmatrix}
		0.05 & 0.02 & -0.01\\ 
		0.01 & 0.05 & -0.02\\ 
		0 &0.02  & +0.02 
	\end{bmatrix}
	\begin{bmatrix}
		u_{k,1} 
		\\u_{k,2}
		\\u_{k,3}  
	\end{bmatrix} 
	-
	\begin{bmatrix}
		0.57
		\\0.37
		\\0.29 
	\end{bmatrix}
\end{eqnarray}

\hlgreen{In ML, more data collection and training does generate an accurate model, but that leads to poor generalization. In such a scenario, the model shows poor performance when presented with new data because too much training data may result in modeling the random noise, referred to as overfitting.} \hlgreen{It was found that the DTSINDYc model $ M_1 $ given by} \eqref{eq_M3}, \hlgreen{generalizes the entire dataset (black dashed-line plots in} \autoref{fig_dtsindyc_tof_states_3} (b) to (d)) \hlgreen{with a fraction of noisy training data ($m=500$ and $n=3$), which confirms the robustness of DTSINDYC modeling with low and noisy data availability. Although the model was trained with peristalsis waves of $ 8  $ mm amplitude, but it was able to generalize various amplitude  peristaltic waves as shown by the blue color plots in} \autoref{fig_dtsindyc_tof_states_3} (b) to (d).  


Likewise, model $ M_2 $ was derived from the data collected from \acp{VPS} located at $ L_5 $, $ L_6 $, and $ L_7 $. The recorded output from the \acp{VPS} exhibited noisy data in the absence of an output filter. Irrespective of the noise, \ac{DTSINDYC} robustly predicted the state with great accuracy. The algorithm was able to manage the bias-variance trade-off adequately to deliver a precise fit to the data. To emphasize the versatility of \ac{DTSINDYC}, it is important to note that the procedure followed to built and validate $M_2  $ was kept same as $M_1  $, except that the data for the respective models were collected from different sensors. 

\subsection{Stability Analysis}

A system with a bounded control input is said to be Input-to-State Stability (ISS) if each and every state trajectory stays bounded irrespective of the initial state, and  if the input grows small, the state trajectory eventually reduces \cite{Hassan_nonlinear}. Re-writing \eqref{eq_dtsindyc} to incorporate the form of diiference equations in \eqref{eq_M3} 

\begin{align}
	\label{eq_dtsindyc_rewritten}
	{\mathbf{x}}_{k+1}={\mathbf A}{\mathbf{x}}_{k}+{\mathbf {\mathbf g'}}({\mathbf{ x}}_{k})+{\mathbf{ \tilde u}}_k\text{ }| \text{ }{\mathbf{ x}}_{k}, {\mathbf{ \tilde u}}_k\in \mathbb{R}^{3}
\end{align}
In \eqref{eq_dtsindyc_rewritten}, $  {\mathbf g'}(.)$ and $ {\mathbf{ \tilde u}}_k = [ {{ \tilde u}}_{k,1}, {{ \tilde u}}_{k,2} ,  {{ \tilde u}}_{k,3}]^{T}$ represent the nonlinear, and $ 2^{nd} $ and $ 3^{rd} $ terms (input terms) of \eqref{eq_M3}, respectively. Equation \eqref{eq_dtsindyc_rewritten} has an equilibrium point at the origin. The state matrix $ \mathbf A $ determined from the linear approximation of \eqref{eq_dtsindyc_rewritten} for $ {\mathbf {\tilde u}_k} = {\mathbf 0}$ has eigenvalues at $ 0.87 $, $ 0.82 $, and $ 0.91 $, which proves that the system is asymptotically stable since all the poles are lying inside the unit circle.

By definition, a continuous function $ V $ on $  \mathbb{R}^{3}$ for \eqref{eq_dtsindyc_rewritten} becomes  ISS-Lyapunov function if it holds the following two properties 

\begin{Properties}
	\item $  \alpha_{1}(||{\bf{x}}_k||) \le$  $V({\mathbf{x}}_k)$ $ \le \alpha_{2}(||{\bf{x}}_k||)$, and 
	
	\item $\Delta V({\bf{x}}_k) = V({\bf{x}}_{k+1})-V({{\bf{x}}_{k}}) \le -\alpha_3(||{{\bf{x}}_k}||) + \sigma(||{\bf{\tilde u}}_k||)$, 
\end{Properties}
where $ \alpha_{1}(.) $, $ \alpha_2(.) $ and $ \alpha_{3}(.) $ $ \in $ class $K_{\infty} $ functions and $ \sigma(.) \in$ class $ K $ function \cite{jiang2001input}.

Let, Lyapunov function $ {V({\bf{x}}_k)} = {{\bf{x}}_k^{T}} {\mathbf P} {{\bf{x}}_k} $, where $ {\mathbf P} = diag(p_{1}, \text{ }p_{2}, \text{ }p_{3})$ is a positive definite matrix such that $ p_1 $ and $ p_2$ are in order of magnitude of $ -2 $, $ p_2<p_1 $, and $ p_3=1 $. By nature of definition of $V({\bf{x}}_k)$, it satisfies \textit{Property 1} with $ \alpha_{1}(||{\bf{x}}||_k) = \lambda{min}({\mathbf P}){|{|{\bf{x}}_k}||^{2}} $ and $ \alpha_{2}(||{\bf{x}_k}||) = \lambda{max}({\mathbf P}){|{|{\bf{x}}_k}||^{2}} $, where $ \lambda{min}({\mathbf P}) $ and $ \lambda{max}({\mathbf P}) $ are the smallest and the highest eigenvalues of $ {\mathbf P} $.  To verify \textit{Property 2}, $ \Delta{V({\bf{x}}_k)} $ is determined by using \eqref{eq_M3} and \eqref{eq_dtsindyc_rewritten}, and then it is approximated by eliminating terms based on the coefficient values. Hence, approximated $ \Delta{V({\bf{x}}_k)} $ can be written as


\begin{eqnarray}
	\label{eq_delta_V_pt2}
	\begin{split}
	\Delta{V({\bf{x}}_k)} &\approx -0.27p_1x_{1}^{2} - 0.3p_2x_{2}^{2}  - 0.17p_3{x_3}^{2} - 0.28p_2\\
	&x_2x_3	+ 0.97p_1\tilde u_1x_1 + 0.63p_2\tilde u_2x_2 - 0.12p_2{\tilde u_2}\\&x_3 +0.53p_3\tilde u_3 x_3
	+0.32p_1\tilde u_1^{2} + 0.14p_2\tilde u_2^{2}
	\end{split}
\end{eqnarray}

Using inequalities, $ -0.28p_2x_2x_3 \le 0.28p_2(x_2^{2} + x_3^{2}) $, $  \pm \tilde u_ix_j \le ||{\bf{\tilde u}}||_{\infty}|x_j|$, $ { x}_j^{2}\le||{\bf{x}}||^{2} $, and $\pm\sum c_j{\tilde u}_j^{2}\le||{\bf{\tilde u}}||^{2} $ $ \forall$ $ 0\le c_j \le 1$,  \eqref{eq_delta_V_pt2} can be re-written as


\begin{eqnarray}
	\label{eq_delta_V_pt4}
	\begin{split}
		\Delta{V({\bf{x}}_k)} &\le (-0.27p_1x_{1}^{2} + 0.97p_1||{\bf{\tilde u}}||_{\infty}|x_1| + ||{\bf{\tilde u}}||^{2})+\\& 		
		(0.63p_2||{\bf{\tilde u}}||_{\infty}|x_2|-0.02p_2x_{2}^{2})  +\{- (0.17p_3\\&-0.28p_2)x_3^{2} +(0.12p_2+0.53p_3)||{\bf{\tilde u}}||_{\infty}|x_3|)\}  
	\end{split}
\end{eqnarray}

Since, $ p3>>p2 $, \eqref{eq_delta_V_pt4} can be approximated as 

\begin{eqnarray}
	\label{eq_delta_V_pt5}
	\begin{split}
		\Delta{V({\bf{x}}_k)} &\le (-0.27p_1x_{1}^{2} + 0.97p_1||{\bf{\tilde u}}||_{\infty}|x_1| + ||{\bf{\tilde u}}||^{2}) 		
		\\&(-0.02p_2x_{2}^{2} + 0.63p_2||{\bf{\tilde u}}||_{\infty}|x_2|)  +(- 0.17p_3\\&x_3^{2} +0.53p_3||{\bf{\tilde u}}||_{\infty}|x_3|)
	\end{split}
\end{eqnarray}

For, $ {\mathbf{ \tilde u}}_k  = \bf{0}$, \eqref{eq_delta_V_pt5} reduces to $\Delta{V({\bf{x}}_k)} \le -(0.27p_1x_{1}^{2}   +0.02p_2x_{2}^{2} + 0.17p_3x_3^{2})$, which is a negative definite function. Hence, DTSINDYc model $ M_1 $ represented by \eqref{eq_M3} and \eqref{eq_dtsindyc_rewritten} is asymptotically stable under zero-input condition. To use the term $  -(0.27p_1x_{1}^{2}   +0.02p_2x_{2}^{2} + 0.17p_3x_3^{2}) $ to dominate the others terms in \eqref{eq_delta_V_pt5}, \eqref{eq_delta_V_pt5} can be re-written as



\begin{eqnarray}
	\label{eq_delta_V_pt6}
	\begin{split}
		\Delta{V({\bf{x}}_k)} &\le (1-\theta)(-0.27 p_1x_{1}^{2} -0.02 p_2x_{2}^{2} - 0.17p_3{x_3}^{2})\\&-0.27\theta p_1(x_{1}^{2} - 3.59||{\bf{\tilde u}}||_{\infty}|x_1|\theta^{-1} - 3.70||{\bf{\tilde u}}||^{2}\\&\theta^{-1})		
		-0.02 \theta p_2(x_{2}^{2} - 31.5||{\bf{\tilde u}}||_{\infty}|x_2|\theta^{-1}) \\&- 0.17\theta p_3 (x_3^{2} - 3.11||{\bf{\tilde u}}||_{\infty}|x_3|\theta^{-1})
	\end{split}
\end{eqnarray}

where $ 0<\theta<1 $. For $ |x_1|$$ > $ $\rho_1(\|{\mathbf {\tilde u}}\|_{\infty})=$ $[0.5\{(14.80\theta+12.89)\|{\mathbf {\tilde u}}\|_{\infty}^{2}\}^{0.5}+$ $1.79{\mathbf \|{\mathbf {\tilde u}}\|^{\infty}}]\theta^{-1}$, $ |x_2|>\rho_2(\|{\mathbf {\tilde u}}\|_{\infty})={31.5\|{\mathbf {\tilde u}}\|_{\infty}}{\theta}^{-1}$ , and $  |x_3|>\rho_3(\|{\mathbf {\tilde u}}\|_{\infty})={3.11\|{\mathbf {\tilde u}}\|_{\infty}}{\theta}^{-1}$, \eqref{eq_delta_V_pt6} is negative definite. In \eqref{eq_delta_V_pt6}, $ (1-\theta)(-0.27 p_1x_{1}^{2} -0.02 p_2x_{2}^{2} - 0.17p_3{x_3}^{2}) $ is a class $ K_\infty $ function and if $ \mathbf {\tilde u} $ is bounded then a class $ K $ function $ \rho(\|\mathbf {\tilde u} \|) $ can be defined such that $ \rho(\|\mathbf {\tilde u} \|) = \max \{\rho_1(\|\mathbf {\tilde u} \|)\},\rho_2(\|\mathbf {\tilde u} \|),\rho_3(\|\mathbf {\tilde u} \|) \}$ to satisfy the \textit{Property 2}. Hence, model $ M_1 $ is ISS. \textit{Note}: Without the loss of generality, in this analysis for simplifying the calculations, other values of $ |x_1| $, $ |x_2| $, and $ |x_3| $ satisfying \eqref{eq_delta_V_pt6} are not considered. 

For another variant of RoSEv2.0, the coefficients of the discovered difference equations given by \eqref{eq_M3}, but a similar kind of analysis can be performed for any set of nonlinear difference equations, provided the equations are asymptotically stable, and inputs are bounded. Intuitively, a soft pneumatic robot is an underactuated and underdamped system, which is always stable under zero actuation. If a unit-step input is applied to the RoSEv2.0, the chamber deformation always remains at the steady-state position, making the robot inherently in ISS.

%

\subsection{MPC Performance Testing}

The results presented in this section are associated with the theory and methodology presented in \ref{sec_mpc_theory} and \ref{sec_mpc_design}. Raspberry Pi 3B+ based host computer (\autoref{fig_fig_RoSEv2.0_complete_setup}) is used to run the DTSINDYc-MPC algorithm to calculate the optimal control values required for tracking the desired peristaltic waves.

\begin{figure}[!bt]
	\myfloatalign
	{\includegraphics[width=0.80\linewidth]{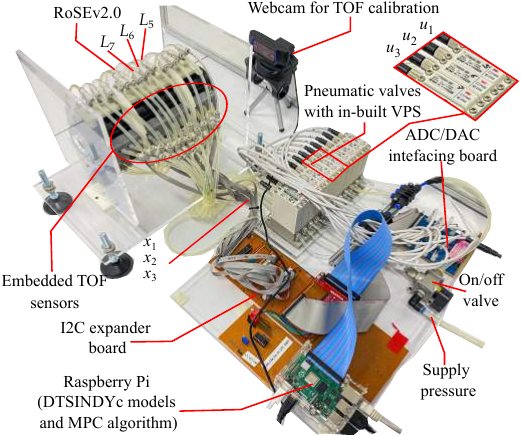}}
	\caption[RoSE complete set-up used for performing the MPC experiemnts]{RoSE complete set-up used for performing the MPC experiemnts.} 
	\label{fig_fig_RoSEv2.0_complete_setup}
\end{figure}

{For implementing the MPC, the $ \mathbf Q $, ${\mathbf{R_u}} $ and $ {\mathbf{R_{\Delta u}}} $ matrices in} \eqref{eq_mpc_control_optimization} {were chosen to be positive definite diagonal matrices. By assigning appropriate values to these matrices, the relative importance of the control goals have been considered. In this research, larger $ \mathbf Q $ values ($ \mathbf{Q} = 5\mathbf{I}_{3\times3} $, where $ \mathbf{I}_{3\times3}  $ represents identity matrix) relative to the ${\mathbf{R_u}} $ and $ {\mathbf{R_{\Delta u}}} $ values ($ {\mathbf{R_u}} = 0.5\mathbf{I}_{3\times3} $, ${\mathbf{R_{\Delta u}}}=0.5\mathbf{I}_{3\times3}$) were selected since larger ${\mathbf{R_u}} $ and $ {\mathbf{R_{\Delta u}}} $ values result in slower dynamic responses. } $ \mathbf {\tilde Q} $ was determined by solving \eqref{eq_lyapunov equation}.

The control performance of the implemented \ac{MPC} with model $ M_1 $ was estimated in terms of \ac{RMSE} (mm) and total execution time (s) taken by the \ac{MPC} algorithm to stop. Both the performance indicators were evaluated by varying the prediction horizon $N_p$ from one to eight, keeping the control horizon $ N_c =N_p$. The evaluation was done prior to the tuning of the control bounds.

To compare the \ac{MPC} performance of $N_p>1$ with $N_p=1$ (single-step ahead prediction), the \ac{RMSE} for $N_p=1$ was subtracted from \ac{RMSE} of all $N_p$. From \autoref{fig_mpc_performance} (a), it can be seen that $rmse_1$, $rmse_2$, and $rmse_3$ decrease up to $N_p=4$, where $rmse_1$, $rmse_2$, and $rmse_3$ are the root mean square error between the reference displacement signals ($ x^{(ref)}_1 $, $ x^{(ref)}_2 $, and $ x^{(ref)}_3 $) and the \ac{RoSEv2.0} \ac{TOF} outputs ($ x_1 $, $ x_2 $, and $ x_3 $) in response to the optimized control law ($  u_1$, $  u_2$, and $  u_3$). Since the \ac{RoSEv2.0} conduit radius is $ 10 $ mm so a slight increment in the error ($ >0.4 $ mm) will cause a significant impact on the \ac{MPC} performance. Until $N_p=4$ in \autoref{fig_mpc_performance} (a), all the errors tend to decrease with an increase of $N_p$. From \autoref{fig_mpc_performance} (b), it can be observed that the execution time drastically goes up as $N_p$ increments.  Since no significant difference in the \ac{MPC} performance was found for $ N_p>4 $, thus $ N_c=N_p=4 $ was chosen.

\begin{figure}[!bt]
	\myfloatalign
	{\includegraphics[width=0.8\linewidth]{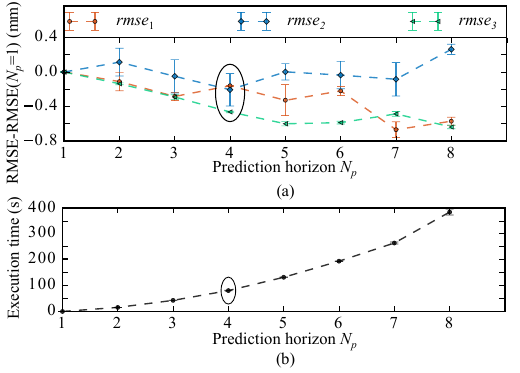}}
	\caption[Estimation of control performance of $M_1$ with different prediction horizon $N_p$.]{Estimation of control performance of $M_1$ with different prediction horizon $N_p$. (a) \ac{RMSE} between the \ac{RoSEv2.0} controlled output and \ac{MPC} reference input is plotted with $N_p$. (b) Plot showing the total execution time of \ac{MPC} with the change of $N_p$.} 
	\label{fig_mpc_performance}
\end{figure}

{Another reason for choosing $ N_p=4 $ is the generation of peristalsis waves in the range of $ 20 $ to $ 60 $ mm.s$ ^{-1} $, which is the physiological wave speed range in a human esophagus. Since the axial distance between two adjacent RoSEv2.0 chambers (located at $ L_5 $ and $ L_6 $) is $ z_d= 15 $ mm, thus to achieve $ c\le60$ mm.s$ ^{-1} $, a time difference of $ t_d\ge0.25 $ s is needed between the actuation of the chambers located in $ L_5 $ and $ L_6 $. At $ Np=4 $, an iteration time $ T_s=0.16 $ s was recorded, thus number of samples $ N_s\ge t_d/Ts=2 $ were generated. If $ Np $ increases, then $ T_s $ increases, and at $ Ts>0.25$ s, undersampling occurs, and it becomes impossible to generate a wave speed of $ 60$ mm.s$ ^{-1} $. }

After selecting a suitable prediction horizon and tuning the control bounds, the performance of \ac{MPC} with model $ M_1 $ was tested on the set-up as shown in \autoref{fig_fig_RoSEv2.0_complete_setup}. Reference time-series peristalsis waves ($ x^{(ref)}_1 $, $ x^{(ref)}_2 $, and $ x^{(ref)}_3 $) of speed $ 20 $ mm.s$ ^{-1} $, wavelength $ 75 $ mm, and amplitude $ 5 $, $ 7.5 $, and $ 10 $ mm (solid line orange plot, \autoref{fig_mpc_various_per_amplitude_tof} (a) to (c)) were applied to the \ac{MPC} with an average \ac{MPC} loop iteration time of $ T_s=0.15 $ s.

\begin{figure}[!bt]
	\myfloatalign
	{\includegraphics[width=1\linewidth]{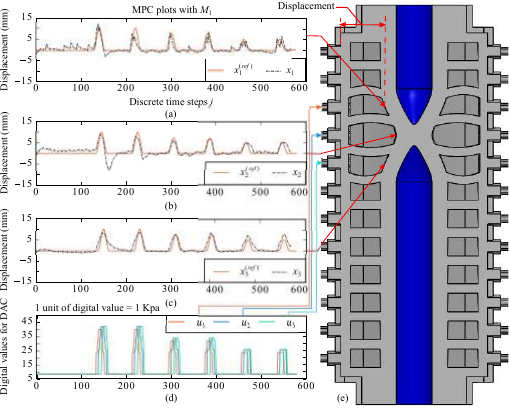}}
	\caption[\ac{MPC} performance results with model $M_1$.]{\ac{MPC} performance results with model $M_1$. The orange plots in (a) to (c) represent reference sinusoidal waveform trajectories of various amplitudes, applied to the \ac{MPC} for RoSEv2.0 $L_5$, $L_6$, and $L_7$. The dashed black line represents the response of the \acp{TOF} concerning the control applied by the \ac{MPC}, shown in (d). (e) Schematic of RoSEv2.0 illustrating the physical meaning of the recorded TOF displacement data and applied control pressure. } 
	\label{fig_mpc_various_per_amplitude_tof}
\end{figure}

In \autoref{fig_mpc_various_per_amplitude_tof} (a) to (c), it can be seen that the three \ac{RoSEv2.0} states ($ x_1 $, $ x_2 $, and $ x_3 $ corresponding to $ L_5 $, $ L_6 $, and $ L_7 $ as shown in \autoref{fig_fig_RoSEv2.0_complete_setup}) tracked the prescribed peristalsis trajectory satisfactorily when actuated with the \ac{MPC} generated control signals ($  u_1$, $  u_2$, and $  u_3$ of \autoref{fig_mpc_various_per_amplitude_tof} (d) and \autoref{fig_fig_RoSEv2.0_complete_setup}) for the pneumatic valves. In contrast to the open-loop response discussed in \ref{section_dtsindyc_models}, the implementation of \ac{RoSEv2.0} \ac{MPC} successfully determined the digital \ac{DAC} values for complete conduit occlusion, especially for layers $ L_6 $ and $ L_7 $ (\autoref{fig_dtsindyc_tof_states_3} (c) and (d)).

It can also be observed that the accuracy of the tracking is far better while inflating the chambers than deflating, which is not a matter of concern because the inflation of the chambers provides shape to the bolus tail and pushes it forward. Hence, half reference wavelength was achieved successfully. \hlcyan{Peristalsis waves of amplitude and frequency of $ 10 $ mm and $ 140  $ kHz respectively were prescribed to verify the performance of the MPC under higher frequency, which is different from the training data frequency} (\autoref{fig_mpc_tof_high_freq}). \hlcyan{The experimental results from the controller have shown an RMSE of $ 0.8 $ mm, which further reinforces the robustness capability of the MPC.  }

\begin{figure}[!bt]
	\myfloatalign
	{\includegraphics[width=0.8\linewidth]{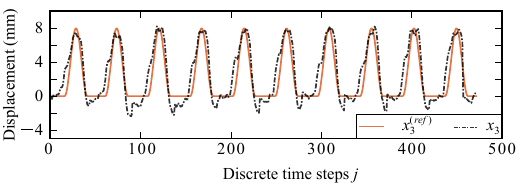}}
	\caption[MPC response under high-frequency peristaltic waves. ]{\hlcyan{MPC response under high-frequency peristaltic waves. }} 
	\label{fig_mpc_tof_high_freq}
\end{figure}

\hlgreen{To emphasize the fact that the presented control framework can be extended to any soft-robotic system with any sensor, MPC with model $ M_2 $, controlling the air chamber pressure, was implemented.} The MPC with feedback from the \acp{VPS} (\autoref{fig_fig_RoSEv2.0_complete_setup}) have shown successful tracking of the reference pressure signals (\autoref{fig_mpc_various_per_amplitude_adc}). In \autoref{fig_mpc_various_per_amplitude_adc}, only state $ x_1 $ has been shown. States  $ x_2 $, and $ x_3 $ have shown a similar satisfactorily tracking profile.

\begin{figure}[!bt]
	\myfloatalign
	{\includegraphics[width=0.8\linewidth]{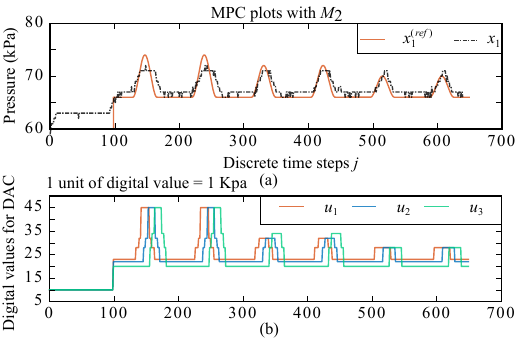}}
	\caption{\ac{MPC} performance results with model $M_2$.} 
	\label{fig_mpc_various_per_amplitude_adc}
\end{figure}

\subsection{Application of MPC in stent migration testing}

A starch-based fluid thickener (Nutulis, Nutricia, Schiphol, NL) was used to replicate the masticated boluses throughout the experimentation \cite{o2010viscosity}.
Boluses were formulated by mixing $ 1 $, $ 2 $, and $ 3 $ scoops with $ 200 $ ml of water to cover syrup, custard, and pudding-like consistencies, respectively. The associated concentrations and viscosities are 60, 80 and 100 g.L$ ^{-1} $ and 0.45 $ \pm $ 0.2,
1.2 $ \pm $ 0.4 and  3.0 $ \pm $ 1.0 Pa.s$ ^{-1} $, respectively \cite{Dirven2015}. 
After the RoSEv2.0 conduit was lubricated with artificial saliva (Aquae Dry Mouth Spray, Hamilton), the candidate stent was deployed in the  RoSEv2.0 conduit by using an intruder sheath and a retrieval thread.  The complete stent deployment setup is discussed in \cite{bhattacharya2020rose}.

Stent migration experiments with bolus swallow were performed on \ac{RoSEv2.0} by using peristaltic actuation protocol with 50 peristalsis cycles. The applied methodology for conducting the experiments has been discussed in \cite{bhattacharya2020rose}. From \autoref{fig_stent_mig_res}, it can be observed that with the increase in bolus concentration, the migration slightly increases. The obtained results are in line with the results presented in \cite{bhattacharya2020rose} in which the experiments were conducted in an open-loop manner, and the actual conduit displacement was not tracked. Since stent migration depends on the peristalsis parameters, no significant difference in migration results was found compared to the results presented in \cite{bhattacharya2020rose}. \hlgreen{The negligible migration of the stent is due to its exertion of sufficient radial force on the conduit wall, which ensures its proper fixation under peristalsis.} But, with the addition of \ac{MPC}, it can now be assured that prescribed peristalsis has been achieved because in stent implanted \ac{RoSEv2.0} experiments, the conduit displacement was controlled by the \ac{MPC} to ensure occlusion. Hence, implementing \ac{MPC} has further enhanced the application of stent migration testing in \ac{RoSEv2.0}. 

\begin{figure}[!tb]
	\myfloatalign
	{\includegraphics[width=0.8\linewidth]{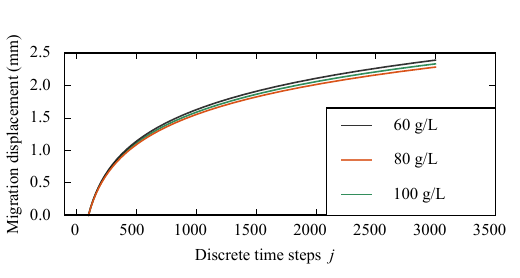}}
	\caption[Stent migration results with the change in bolus concentration, conducted for 50 peristalsis cycle.]{Stent migration results with the change in bolus concentration conducted for 50 peristalsis cycles.} 
	\label{fig_stent_mig_res}
\end{figure}

\section{Conclusion}

\ac{RoSEv2.0} was developed with embedded TOF sensors to measure its conduit deformation,
which has extended its capabilities. Additionally, with the aid of the integrated \ac{TOF} sensors, the study has first attempted to implement a closed-loop controller in the form of MPC in \ac{RoSEv2.0} (\autoref{fig_mpc_implementation}). Since, the methodology presented to design the \ac{MPC} can be extended to any other soft robotic application provided the \ac{SINDYC} model can be generated, thus MPC was also implemented to control \ac{RoSEv2.0} air chamber pressure. The procedure followed for the implementation remained the same, except for the \ac{DTSINDYC} models, which were identified on \ac{VPS} data. 
Peristalsis waves of speed 20 mm.s$ ^{-1} $, wavelength $ 75 $ mm, and amplitudes $ 5 $, $  7.5 $, and $ 10 $ mm were successfully generated by the \ac{MPC} (\autoref{fig_mpc_various_per_amplitude_tof}) for stent testing. 

Although the generated model has provided satisfactory performance for peristalsis generation and stent testing, as seen in the results, there is always some model mismatch. Generally, the model mismatch problem in an MPC is treated as a disturbance, which can be addressed by determining the Region Of Attraction (ROA). By introducing a positive invariant terminal set, the ROA can be estimated, where the MPC ensures recursive feasibility and stability. Along with the ROA determination, an online learning-based MPC in which the model gets updated over time will be designed in future studies to address the elastomer stiffness changes.

\section*{Data accessibility}

A project website for this work is available at:
\url{https://bhattner143.github.io/rosev2-dtsindyc.github.io/}.
The code used in this work is made available at:
\url{https://github.com/bhattner143/SINDYc_MPC_RoSE_symmetric_peristaltic.git}.
The data can be generated using the code.


\bibliographystyle{Bibliography/IEEEtranTIE}
\bibliography{Bibliography/IEEEabrv,Bibliography/Bibliography2}\ 

@Article{rus2015,
  author    = {Rus, Daniela and Tolley, Michael T},
  journal   = {Nature},
  title     = {Design, fabrication and control of soft robots},
  year      = {2015},
  number    = {7553},
  pages     = {467--475},
  volume    = {521},
  publisher = {Nature Publishing Group},
  doi={10.1038/nature14543},
}

@Article{Dirven2014, author={S. {Dirven} and F. {Chen} and W. {Xu} and J. E. {Bronlund} and J. {Allen} and L. K. {Cheng}}, journal={IEEE/ASME Transactions on Mechatronics}, title={Design and Characterization of a Peristaltic Actuator Inspired by Esophageal Swallowing}, year={2014}, volume={19}, number={4}, pages={1234-1242},
doi={10.1109/TMECH.2013.2276406},}

@Article{Dirven2015, author={S. {Dirven} and W. {Xu} and L. K. {Cheng}}, journal={IEEE/ASME Transactions on Mechatronics}, title={Sinusoidal Peristaltic Waves in Soft Actuator for Mimicry of Esophageal Swallowing}, year={2015}, volume={20}, number={3}, pages={1331-1337},
doi={10.1109/TMECH.2014.2337291},}

@article{Garcia2010,
	author = {Garcia, Jane M and Chambers IV, Edgar and Clark, Megan and Helverson, Jennifer and Matta, Ziad},
	title = {Quality of care issues for dysphagia: modifications involving oral fluids},
	journal = {Journal of Clinical Nursing},
	volume = {19},
	number = {11-12},
	pages = {1618-1624},
	doi = {10.1111/j.1365-2702.2009.03009.x},
	year = {2010}
}

@article{hirdes2013vitro,
	title={In vitro evaluation of the radial and axial force of self-expanding esophageal stents},
	author={Hirdes, Meike MC and Vleggaar, Frank P and De Beule, Matthieu and Siersema, Peter D},
	journal={Endoscopy},
	volume={45},
	number={12},
	pages={997--1005},
	year={2013},
	doi={10.1055/s-0033-1344985},
}

@article{sharma2010role,
	title={Role of esophageal stents in benign and malignant diseases},
	author={Sharma, Prateek and Kozarek, Richard and Practice Parameters Committee of the American College of Gastroenterology and others},
	journal={American Journal of Gastroenterology},
	volume={105},
	number={2},
	pages={258--273},
	year={2010},
	doi={10.1038/ajg.2009.684},
}

@article{hanawa2009materials,
	title={Materials for metallic stents},
	author={Hanawa, Takao},
	journal={Journal of Artificial Organs},
	volume={12},
	number={2},
	pages={73--79},
	year={2009},
	publisher={Springer},
	doi={10.1007/s10047-008-0456-x},
}

@article{garbey2016esophageal,
	title={Esophageal stent migration: Testing few hypothesis with a simplified mathematical model},
	author={Garbey, Marc and Salmon, Remi and Fikfak, Vid and Clerc, Claude O},
	journal={Computers in biology and medicine},
	volume={79},
	pages={259--265},
	year={2016},
	publisher={Elsevier},
	doi={10.1016/j.compbiomed.2016.10.024}
}

@article{mozafari2018migration,
	title={Migration resistance of esophageal stents: The role of stent design},
	author={Mozafari, Hozhabr and Dong, Pengfei and Zhao, Shijia and Bi, Yonghua and Han, Xinwei and Gu, Linxia},
	journal={Computers in biology and medicine},
	volume={100},
	pages={43--49},
	year={2018},
	publisher={Elsevier},
	doi={10.1016/j.compbiomed.2018.06.031}
}

@article {brunton2016discovering,
	author = {Brunton, Steven L. and Proctor, Joshua L. and Kutz, J. Nathan},
	title = {Discovering governing equations from data by sparse identification of nonlinear dynamical systems},
	volume = {113},
	number = {15},
	pages = {3932--3937},
	year = {2016},
	doi = {10.1073/pnas.1517384113},
	publisher = {National Academy of Sciences},
	issn = {0027-8424},
	journal = {Proceedings of the National Academy of Sciences}
}

@article{brunton2016extracting,
	author={Brunton, Bingni W and Johnson, Lise A and Ojemann, Jeffrey G and Kutz, J Nathan},
	title = {Extracting spatial-temporal coherent patterns in large-scale neural recordings using dynamic mode decomposition},
	journal = {Journal of Neuroscience Methods},
	volume = {258},
	pages = {1 - 15},
	year = {2016},
	issn = {0165-0270},
	doi = {10.1016/j.jneumeth.2015.10.010}
}

@article{brunton2016sparse,
	author = {Brunton, Steven L. and Proctor, Joshua L. and Kutz, J. Nathan},
	title = {Sparse Identification of Nonlinear Dynamics with Control ({SINDY}c)},
	journal = {IFAC-PapersOnLine},
	volume = {49},
	number = {18},
	pages = {710-715},
	ISSN = {2405-8963},
	DOI = {10.1016/j.ifacol.2016.10.249},
	year = {2016},
	type = {Journal Article}
}

@book{dullerud2013course,
	title={A course in robust control theory: a convex approach},
	author={Dullerud, Geir E and Paganini, Fernando},
	volume={36},
	year={2013},
	publisher={Springer Science \& Business Media}
}

@article{mayne2000constrained,
	title={Constrained model predictive control: Stability and optimality},
	author={Mayne, David Q and Rawlings, James B and Rao, Christopher V and Scokaert, Pierre OM},
	journal={Automatica},
	volume={36},
	number={6},
	pages={789--814},
	year={2000},
	publisher={Elsevier},
	doi={10.1016/S0005-1098(99)00214-9}
}

@article{clarke1987generalizeda,
	title={Generalized predictive control-Part I. {T}he basic algorithm},
	author={Clarke, David W and Mohtadi, Coorous and Tuffs, PS},
	journal={Automatica},
	volume={23},
	number={2},
	pages={137--148},
	year={1987},
	publisher={Elsevier},
	doi={10.1016/0005-1098(87)90087-2}
}

@article{garcia1989model,
	title={Model predictive control: theory and practice-a survey},
	author={Garcia, Carlos E and Prett, David M and Morari, Manfred},
	journal={Automatica},
	volume={25},
	number={3},
	pages={335--348},
	year={1989},
	doi={10.1016/0005-1098(89)90002-2}
}

@article{golbert2004model,
	title={Model-based control of fuel cells::(1) regulatory control},
	author={Golbert, Joshua and Lewin, Daniel R},
	journal={Journal of power sources},
	volume={135},
	number={1-2},
	pages={135--151},
	year={2004},
	publisher={Elsevier},
	doi={10.1016/j.jpowsour.2004.04.008}
}

@article{andersen1992evaluating,
	title={Evaluating estimation of gain directionality: Part 1: Methodology},
	author={Andersen, Henrik Weisberg and K{\"u}mmel, Mogens},
	journal={Journal of Process Control},
	volume={2},
	number={2},
	pages={59--66},
	year={1992},
	publisher={Elsevier},
	doi={10.1016/0959-1524(92)80002-F}
}

@article{kaiser2018sparse,
	title={Sparse identification of nonlinear dynamics for model predictive control in the low-data limit},
	author={Kaiser, Eurika and Kutz, J Nathan and Brunton, Steven L},
	journal={Proceedings of the Royal Society A},
	volume={474},
	number={2219},
	pages={20180335},
	year={2018},
	publisher={The Royal Society Publishing},
	doi={10.1098/rspa.2018.0335},
}

@article{bhattacharya2019sparse,
	title={Sparse machine learning discovery of dynamic differential equation of an esophageal swallowing robot},
	author={Bhattacharya, Dipankar and Cheng, Leo K and Xu, Weiliang},
	journal={IEEE Transactions on Industrial Electronics},
	volume={67},
	number={6},
	pages={4711--4720},
	year={2019},
	publisher={IEEE},
	doi={10.1109/TIE.2019.2928239}
}

@article{bhattacharya2020rose,
	author = {Bhattacharya, Dipankar and J.V. Ali, Sherine and Cheng, Leo K. and Xu, Weiliang},
	title = {Ro{SE}: A Robotic Soft Esophagus for Endoprosthetic Stent Testing},
	journal = {Soft Robotics},
	doi = {10.1089/soro.2019.0205}
}

@article{misra2001mathematical,
	title={A mathematical model for oesophageal swallowing of a food-bolus},
	author={Misra, JC and Pandey, SK},
	journal={Mathematical and Computer Modelling},
	volume={33},
	number={8-9},
	pages={997--1009},
	year={2001},
	publisher={Elsevier},
	doi={10.1016/S0895-7177(00)00295-8Get}
}

@article{takagi2011peristaltic,
	title={Peristaltic pumping of viscous fluid in an elastic tube},
	author={Takagi, D and Balmforth, NJ},
	journal={Journal of Fluid Mechanics},
	volume={672},
	pages={196},
	year={2011},
	publisher={Cambridge University Press},
	doi={10.1017/S0022112010005914}
}

@article{o2010viscosity,
	title={Viscosity and non-Newtonian features of thickened fluids used for dysphagia therapy},
	author={O'Leary, Mark and Hanson, Ben and Smith, Christina},
	journal={Journal of Food Science},
	volume={75},
	number={6},
	pages={E330--E338},
	year={2010},
	publisher={Wiley Online Library},
	doi={0.1111/j.1750-3841.2010.01673.x}
}

@INPROCEEDINGS{hou2020underwater,

	author={X. {Hou} and S. {Guo} and L. {Shi} and H. {Xing} and Z. {Li} and D. {Xia} and M. {Zhou} and Y. {Liu}},

	booktitle={2020 IEEE International Conference on Mechatronics and Automation (ICMA)}, 

	title={Underwater Obstacle Avoiding Trajectory Tracking Approach for Amphibious Spherical Robots}, 

	year={2020},

	volume={},

	number={},

	pages={1348-1353},

	doi={10.1109/ICMA49215.2020.9233669}}

@article{dang2020sogut,
	author = {Dang, Yu and Liu, Yuanxiang and Hashem, Ryman and Bhattacharya, Dipankar and Allen, Jacqueline and Stommel, Martin and Cheng, Leo K. and Xu, Weiliang},
	title = {SoGut: A Soft Robotic Gastric Simulator},
	journal = {Soft Robotics},
	doi={10.1089/soro.2019.0136}
	}

@article{desilva2020pysindy,
	doi = {10.21105/joss.02104},
	year = {2020},
	publisher = {The Open Journal},
	volume = {5},
	number = {49},
	pages = {2104},
	author = {Brian de Silva and Kathleen Champion and Markus Quade and Jean-Christophe Loiseau and J. Kutz and Steven Brunton},
	title = {Py{SIND}y: A Python package for the sparse identification of nonlinear dynamical systems from data},
	journal = {Journal of Open Source Software}
}

@article{li2017model,
	title={Model-free control for continuum robots based on an adaptive Kalman filter},
	author={Li, Minhan and Kang, Rongjie and Branson, David T and Dai, Jian S},
	journal={IEEE/ASME Transactions on Mechatronics},
	volume={23},
	number={1},
	pages={286--297},
	year={2017},
	publisher={IEEE},
	doi={10.1109/TMECH.2017.2775663}
}

@article{lee2017nonparametric,
	title={Nonparametric online learning control for soft continuum robot: An enabling technique for effective endoscopic navigation},
	author={Lee, Kit-Hang and Fu, Denny KC and Leong, Martin CW and Chow, Marco and Fu, Hing-Choi and Althoefer, Kaspar and Sze, Kam Yim and Yeung, Chung-Kwong and Kwok, Ka-Wai},
	journal={Soft robotics},
	volume={4},
	number={4},
	pages={324--337},
	year={2017},
	publisher={Mary Ann Liebert, Inc. 140 Huguenot Street, 3rd Floor New Rochelle, NY 10801 USA},
	doi={10.1089/soro.2016.0065
}}

@article{braganza2007neural,
	title={A neural network controller for continuum robots},
	author={Braganza, David and Dawson, Darren M and Walker, Ian D and Nath, Nitendra},
	journal={IEEE transactions on robotics},
	volume={23},
	number={6},
	pages={1270--1277},
	year={2007},
	publisher={IEEE},
	doi={0.1109/TRO.2007.906248}
}

@article{best2016new,
	title={A new soft robot control method: Using model predictive control for a pneumatically actuated humanoid},
	author={Best, Charles M and Gillespie, Morgan T and Hyatt, Phillip and Rupert, Levi and Sherrod, Vallan and Killpack, Marc D},
	journal={IEEE Robotics \& Automation Magazine},
	volume={23},
	number={3},
	pages={75--84},
	year={2016},
	publisher={IEEE},
	doi={10.1109/MRA.2016.2580591}
}

@article{george2018control,
	title={Control strategies for soft robotic manipulators: A survey},
	author={George Thuruthel, Thomas and Ansari, Yasmin and Falotico, Egidio and Laschi, Cecilia},
	journal={Soft robotics},
	volume={5},
	number={2},
	pages={149--163},
	year={2018},
	publisher={Mary Ann Liebert, Inc. 140 Huguenot Street, 3rd Floor New Rochelle, NY 10801 USA},
	doi={10.1089/soro.2017.0007}
}

@inbook{sahi2004,
	title={Digital Control Engineering: Analysis and Design},
	chapter={11},
	author={Fadali, M. Sami},
	year={2020},
	pages={450--461},
	publisher={London, United Kingdom : Academic Press, an imprint of Elsevier},
	Volume = {Third edition}
}

@book{Hassan_nonlinear,
	publisher = {New York : Toronto : New York: Macmillan Pub. Co. ; Maxwell Macmillan Canada ; Maxwell Macmillan International},
	isbn = {002363541X},
	year = {1992},
	title = {Nonlinear systems},
	language = {eng},
	address = {New York : Toronto : New York},
	author = {Khalil, Hassan K.},
}

@inbook{Maciejowski_Predictive,
	publisher = {Prentice Hall},
	year = {2002},
	chapter = {6},
	pages = {167--180},
	title = {Predictive control : with constraints},
	language = {eng},
	address = {Harlow, England ; New York},
	author = {Maciejowski, Jan Marian},
	lccn = {00054713},
}

@article{jiang2001input,
	title={Input-to-state stability for discrete-time nonlinear systems},
	author={Jiang, Zhong-Ping and Wang, Yuan},
	journal={Automatica},
	volume={37},
	number={6},
	pages={857--869},
	year={2001},
	publisher={Elsevier},
	doi={10.1016/S0005-1098(01)00028-0}
}
\begin{IEEEbiography}[{\includegraphics[width=1in,height=1.25in,clip,keepaspectratio]{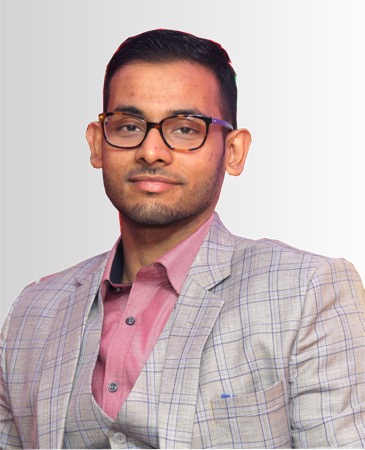}}]
{Dipankar Bhattacharya} Dipankar Bhattacharya received the B.Tech.
degree in electronics and communication engineering
from the NERIST Itanagar,
Arunachal Pradesh, India, in 2010, M.Tech. degree in electrical engineering from
the IIT Roorkee,
Roorkee, India, in 2013,  and Ph.D. degree in mechatronics engineering from the University of Auckland, New Zealand in 2021. He is currently working as a Postdoctoral Research Fellow in mechanical and automation engineering in the Chinese University of Hong Kong, Hong Kong. His research interests include biologically
inspired soft robots, cable-driven parallel robots and machine learning.
\end{IEEEbiography}

\begin{IEEEbiography}[{\includegraphics[width=1in,height=1.25in,clip,keepaspectratio]{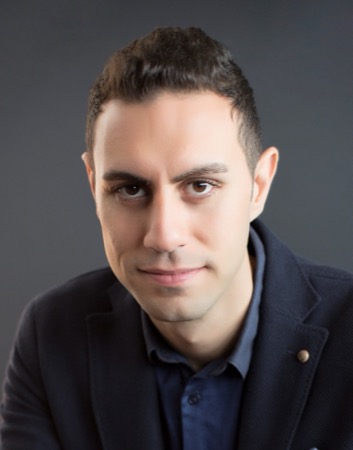}}]{Ryman Hashem}
	received the B.E. degree in mechatronics engineering from AMA International University, Salmabad, Bahrain, in 2012. He received his M.E. (First Class Hons.) and Ph.D. degree in mechatronics engineering from the University of Auckland, Auckland, New Zealand, in 2015, and 2021, respectively. He is currently working as a Postdoctoral Research Fellow in Bio-Inspired Robotics Laboratory (BIRL) in the department of engineering from the University of Cambridge, United Kingdom. His current research interests include biologically-inspired soft robots design and control.
\end{IEEEbiography}

\begin{IEEEbiography}[{\includegraphics[width=1in,height=1.25in,clip,keepaspectratio]{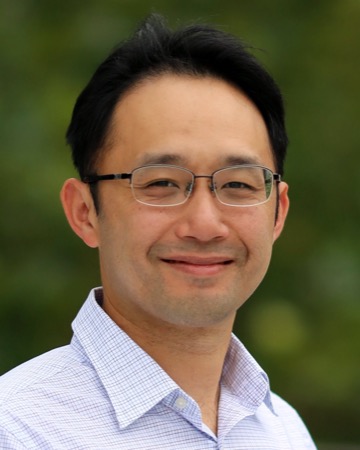}}]
{Leo K. Cheng} received the B.E. (Hons.) degree
in engineering science and the Ph.D. degree in
bioengineering from the University of Auckland,
Auckland, New Zealand, in 1997 and 2002, respectively.
He is currently a Professor with Auckland
Bioengineering Institute, Auckland. His main research
interests include the understanding of
electrophysiological events in the gastrointestinal
tract and the heart, and their relationship to
mechanical function.
\end{IEEEbiography}

\begin{IEEEbiography}[{\includegraphics[width=1in,height=1.25in,clip,keepaspectratio]{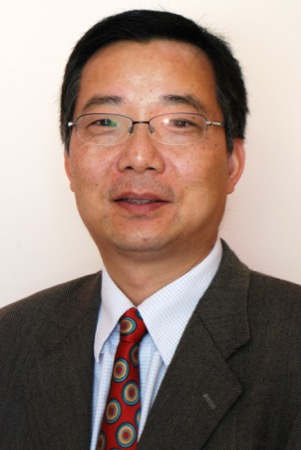}}]
{Weiliang Xu} received the B.E. degree
in manufacturing engineering and the M.E. degree
in mechanical engineering from Southeast
University, Nanjing, China, in 1982 and 1985, respectively,
and the Ph.D. degree in mechatronics
and robotics from Beihang University, Beijing, China, in 1988.
He joined the University of Auckland, Auckland,
New Zealand, in 2011, as the Chair in
Mechatronics Engineering. His current research
interests are mainly advanced mechatronics and robotics
with applications in medicine and foods. 
\end{IEEEbiography}

\end{document}